% to-do
% - title. towards formalization, logic
% - abstract
% - Add some words for ``Tool Support for Using...''
% 

% request to the publisher
% - We want to have bibliography after each chapter, instead of having only one at the end of the whole book. We tried the former but didn't get it properly work (a LaTeX problem). I'm sure you know better how to do so.

\documentclass[journal,onecolumn]{IEEEtran} %See documentation for other class options
\usepackage{fixltx2e,fix-cm}
\usepackage{amssymb}
\usepackage{amsmath}
\usepackage{graphicx}
\usepackage{makeidx}
\usepackage{multicol}
\usepackage{mathptmx}
\usepackage{ifthen}
\usepackage{url}	
\usepackage{cite}

\usepackage{subcaption}
\captionsetup{compatibility=false}

\usepackage{paralist} % inline lists
\usepackage{xspace}

\usepackage{tikz}
\usepackage{listings}
\usepackage{color}
\usepackage{xcolor}
\usepackage[capitalise,english,nameinlink]{cleveref} % load after algorithm2e and hyperref
\usepackage{pifont}% http://ctan.org/pkg/pifont
\usepackage{comment}
\usepackage{wrapfig}
\usepackage{stmaryrd}

\makeindex

\usepackage[per-mode=symbol]{siunitx}
\usepackage{mathtools} 
%%%%%%%%%%%%%%%%%%%%%%%%%%%%%%%%%%%%%%%%%%%%%%%%%%%%%%%%%%%%%%%%%%%
% Theorem environments
%%%%%%%%%%%%%%%%%%%%%%%%%%%%%%%%%%%%%%%%%%%%%%%%%%%%%%%%%%%%%%%%%%%
% aliascnt package to do tricks to use cleveref and apxproof
% search for \newaliascnt to see how it is used
\usepackage{aliascnt}
% apxproof package to move proofs automatically to the appendix
% use appendix=append for submission
% use appendix=strip for camera-ready
% use appendix=inline for arXiv
\usepackage[appendix=strip]{apxproof}
\theoremstyle{plain}
\newtheoremrep{theorem}{Theorem}[section]
\Crefname{theorem}{Thm.}{Thm.}
\newaliascnt{propositioncnt}{theorem}
\newtheoremrep{proposition}[propositioncnt]{Proposition}
\Crefname{propositioncnt}{Prop.}{Prop.}
\newaliascnt{lemmacnt}{theorem}
\newtheoremrep{lemma}[lemmacnt]{Lemma}
\Crefname{lemmacnt}{Lem.}{Lem.}
\newaliascnt{corollarycnt}{theorem}
\newtheoremrep{corollary}[corollarycnt]{Corollary}
\Crefname{corollarycnt}{Cor.}{Cor.}
\newaliascnt{factcnt}{theorem}

\Crefname{factcnt}{Fact}{Fact}
\newaliascnt{assumptioncnt}{theorem}

\Crefname{assumptioncnt}{Asm.}{Asm.}
\theoremstyle{definition}
\newaliascnt{remarkcnt}{theorem}
\newtheorem{remark}[remarkcnt]{Remark}
\Crefname{remarkcnt}{Rem.}{Rem.}
\newaliascnt{notationcnt}{theorem}

\Crefname{notationcnt}{Notation}{Notation}
\newaliascnt{definitioncnt}{theorem}
\newtheorem{definition}[definitioncnt]{Definition}
\Crefname{definitioncnt}{Def.}{Def.}
\newaliascnt{requirementcnt}{theorem}
\newtheorem{requirement}[requirementcnt]{Requirement}
\Crefname{requirementcnt}{Requirement}{Requirements}

%%%%%%%%%%%%%%%%%%%%%%%%%%%%%%%%%%%%%%%%%%%%%%%%%%%%%%%%%%%%
% TODO
%%%%%%%%%%%%%%%%%%%%%%%%%%%%%%%%%%%%%%%%%%%%%%%%%%%%%%%%%%%%
\usepackage[colorinlistoftodos,textsize=footnotesize]{todonotes}
% \ifdefined\VersionWithComments%
% 	\usepackage[colorinlistoftodos,textsize=footnotesize]{todonotes}
%         % This is new in todonotes 1.1.0, which is not in texlive 2017. If you have any problem, please let Masaki know.
% %        \setuptodonotes{inline}
% \else
% 	\usepackage[disable]{todonotes}
% \fi

%%%%%%%%%%%%%%%%%%%%%%%%%%%%%%%%%%%%%%%%%%%%%%%%%%%%%%%%%%%%
% Commands for indices
%%%%%%%%%%%%%%%%%%%%%%%%%%%%%%%%%%%%%%%%%%%%%%%%%%%%%%%%%%%%
\newcommand{\dfn}[1]{\index{#1}\emph{#1}}

%%%%%%%%%%%%%%%%%%%%%%%%%%%%%%%%%%%%%%%%%%%%%%%%%%%%%%%%%%%%%%%%%%%
% Captions for figures and tables
%%%%%%%%%%%%%%%%%%%%%%%%%%%%%%%%%%%%%%%%%%%%%%%%%%%%%%%%%%%%%%%%%%%
\usepackage{caption}
\usepackage{subcaption}

%%%%%%%%%%%%%%%%%%%%%%%%%%%%%%%%%%%%%%%%%%%%%%%%%%%%%%%%%%%%%%%%%%%
% Importing GNUPLOT figures
%%%%%%%%%%%%%%%%%%%%%%%%%%%%%%%%%%%%%%%%%%%%%%%%%%%%%%%%%%%%%%%%%%%
% \usepackage{tikz}
\usepackage{gnuplot-lua-tikz}

%%%%%%%%%%%%%%%%%%%%%%%%%%%%%%%%%%%%%%%%%%%%%%%%%%%%%%%%%%%%
% Glossaries package + definitions
%%%%%%%%%%%%%%%%%%%%%%%%%%%%%%%%%%%%%%%%%%%%%%%%%%%%%%%%%%%%
\usepackage[acronym,shortcuts,nopostdot,nogroupskip,nomain,nonumberlist,hyperfirst=false]{glossaries}
\glsdisablehyper  % don't link anywhere when clicking on term

% Info: 
% the module of the glossaries package makes sure that the on first use, the full text is written, and subsequently only the acronyms... 
% for instance
%     \ac{CPS} will produce "CPS" usually, but "Cyber-Physical System (CPS)" the first time
% if you want to use plural versions use \acs e.g.
%     \acs{cps} -> CPSs (or "cyber-physical systems (CPSs)" on first use 

% You can also capitalize the first letter of the word for using at the beginning of sentences
%     \Ac{cps} -> Cyber-physical system

% default plural-suffix is "s", but you can also define plurals yourself 
% (e.g. to have "hybrid automata" instead of "hybrid automatons")

% Feel free to add entries, (please keep the list alphabetically)
\newacronym{aadl}   {AADL}  {Architecture Analysis \& Design Language}
\newacronym{ads}    {ADS}   {automated driving system}
\newacronym{api}    {API}   {application programming interface}
\newacronym{can}  {CAN}  {controller area network}
\newacronym{cps}  {CPS}  {cyber-physical system}
\newacronym{ctl}    {CTL}   {computation tree logic}
\newacronym{dsl}    {DSL}   {domain-specific language}
\newacronym{dsm}   {DSM}  {domain-specific modelling}
\newacronym{dsml}   {DSML}  {domain-specific modelling language}
\newacronym{emf}    {EMF}   {Eclipse Modeling Framework}
\newacronym[longplural={finite state automata}]  
           {fsa}    {FSA}   {finite state automaton}
\newacronym{fsm}    {FSM}   {finite state machine}
\newacronym{gppl}   {GPPL}  {general-purpose programming language}
\newacronym[longplural={hybrid automata}]
           {ha}     {HA}    {hybrid automaton}
\newacronym{hp}     {HP} {hybrid program}
\newacronym{hs}     {HS}    {hybrid system}
\newacronym{ide}    {IDE}   {integrated development environment}
\newacronym{iot}    {IoT}   {Internet of Things}
\newacronym[longplural={linear hybrid automata}]
            {lha}     {LHA}    {linear hybrid automaton}
\newacronym{ltl}    {LTL}   {linear temporal logic}
\newacronym{mitl}   {MITL}  {metric interval temporal logic}
\newacronym{mtl}   {MTL}  {metric temporal logic}
\newacronym[longplural={finite state automata}]
           {nfa}    {NFA}   {nondeterministic finite automaton}
\newacronym{ode}    {ODE}   {ordinary differential equation}
\newacronym{omg}    {OMG}   {Object Management Group}
\newacronym{po}    {PO}   {proof obligation}
\newacronym{pov}    {POV}   {principal other vehicle}
\newacronym[longplural={parametric timed automata}]
        {pta}    {PTA}   {parametric timed automaton}
\newacronym[longplural={parametric timed data automata}]
        {ptda}    {PTDA}   {parametric timed data automaton}
\newacronym{rte}    {RTE}   {real-time and embedded}
\newacronym{smt}    {SMT}   {satisfiability modulo theories}
\newacronym{stl}    {STL}   {signal temporal logic}
\newacronym{sv}    {SV}   {subject vehicle}
\newacronym{sysml}  {SysML} {Systems Modeling Language}

\newacronym[longplural={timed automata}]
           {ta}     {TA}    {timed automaton}
\newacronym{tctl}   {TCTL}  {timed computation tree logic}
\newacronym[longplural={timed systems}]
           {ts}     {TS}    {timed system}
\newacronym[longplural={timed symbolic automata}]
           {tsa}     {TSA}    {timed symbolic automaton}
\newacronym[longplural={timed symbolic weighted automata}]
           {tswa}     {TSWA}    {timed symbolic weighted automaton}
\newacronym{ui}    {UI}   {user interface}
\newacronym{uml}    {UML}   {Unified Modeling Language}

\makeglossaries

%%%%%%%%%%%%%%%%%%%%%%%%%%%%%%%%%%%%%%%%%%%%%%%%%%%%%%%%%%%%
% TIKZ (from in chapters/monitoring/monitoring.tex)
%%%%%%%%%%%%%%%%%%%%%%%%%%%%%%%%%%%%%%%%%%%%%%%%%%%%%%%%%%%%
\usetikzlibrary{arrows,automata,positioning}
\usetikzlibrary{patterns}
\usetikzlibrary{intersections,calc}
\usetikzlibrary{shapes,shapes.misc,shapes.multipart,shadows}
\usetikzlibrary{decorations.text}
\usetikzlibrary{decorations.markings}
\usetikzlibrary{decorations.pathmorphing}

\tikzstyle{every node}=[initial text=]
% BEGIN ANONYMOUS VERSION
% \ifdefined\VersionWithComments
	\tikzstyle{location}=[circle, minimum size=12pt, draw=black, fill=blue!10, inner sep=1pt] % rectangle, rounded corners
% \else
% 	\tikzstyle{location}=[circle, minimum size=12pt, draw=black, inner sep=1pt]
% \fi
% END ANONYMOUS VERSION
%\tikzstyle{invariant}=[draw=black, dotted, inner sep=1pt] % xshift=1em, 
\tikzstyle{final}=[double]
\tikzstyle{accepting}=[final]
\tikzstyle{PTPMOPT}=[,dashed,color=red,semithick]

\definecolor{coloract}{rgb}{0.50, 0.70, 0.30}
\definecolor{colorclock}{rgb}{0.4, 0.4, 1}
\definecolor{colorconst}{rgb}{0.50, 0.20, 0.00}
\definecolor{colordisc}{rgb}{1, 0, 1}
\definecolor{colorloc}{rgb}{0.4, 0.4, 0.65}
\definecolor{colorparam}{rgb}{1, 0.6, 0.0}
\definecolor{colorvar}{rgb}{0.6, 0.7, 1}
\definecolor{colorlvar}{rgb}{0.4, 0.4, .5}
\definecolor{colordparam}{rgb}{.9, 0.8, 0.0}

\definecolor{mygray}{rgb}{0.5,0.5,0.5}

%%%%%%%%%%%%%%%%%%%%%%%%%%%%%%%%%%%%%%%%%%%%%%%%%%%%%%%%%%%%
% Macros 
%%%%%%%%%%%%%%%%%%%%%%%%%%%%%%%%%%%%%%%%%%%%%%%%%%%%%%%%%%%%

% -----------------------------------------------------
% for chapters/decisionAD
\newcommand{\PR}{P}
\newcommand{\slsd}{\mathrm{slsd}}

%%%%%%%%%%%%%%%%%%%%%%%%%%%%%%%%%%%%%%%%%%%%%%%%%%%%%%%%%%%%
% STRING CONSTANTS (from in chapters/monitoring/monitoring.tex)
%%%%%%%%%%%%%%%%%%%%%%%%%%%%%%%%%%%%%%%%%%%%%%%%%%%%%%%%%%%%

\usepackage{rotating}

\usepackage{booktabs}
\usepackage{tabularx}

\usepackage{proof}
\usepackage{mathpartir}

\begin{document}

\title{Responsibility-Sensitive Safety\\
{\LARGE an Introduction 
with an Eye to Logical Foundations and Formalization
}}

\author{Ichiro Hasuo$^{1,2}$
\thanks{$^{1}$%
%IH, CE, JD, TK, SP, XZ, and FI are with
National Institute of Informatics,
Hitotsubashi 2-1-2,
Tokyo 101-8430, Japan.
        {i.hasuo@acm.org}}%
% \thanks{$^{2}$CE and JD are also with Japanese-French Laboratory for Informatics, IRL 3527, Tokyo, Japan.
% }
% \thanks{$^{3}$TK is also with Japan Science and Technology Agency, 4-1-8, Honcho, Kawaguchi-shi, Saitama 332-0012, Japan.%
% }
\thanks{$^{2}$% %KK, YS, and TS are with
SOKENDAI (The Graduate University for Advanced Studies),  Hayama, Japan. 
}
}
\maketitle

\begin{abstract}
\emph{Responsibility-sensitive safety} (RSS) is an approach to the safety of automated driving systems (ADS). It aims to introduce mathematically formulated \emph{safety rules}, compliance with which guarantees collision avoidance as a mathematical theorem. However, despite the emphasis on mathematical and logical guarantees, the logical foundations and formalization of RSS are largely an unexplored topic of study. In this paper, we present an introduction to RSS, one that we expect will bridge between different research communities
% (especially \ac{ads} and formal logic) 
and 
 pave the way to a logical theory of RSS, its mathematical formalization, and software tools of practical use.
\end{abstract}

%#! pdflatex main

% use wrapper.tex
% bib entries are in chapters/decisionAD/decisionAD.bib
% after done, move the content to chapters/decisionAD/rss.tex

% 6 pages
% - (done) collect slides. look through pptx
% - (done) get the RSS paper draft
% - (done) make an outline
% - (done) browse RSS-related papers. RSS paper draft, IV, ITSC this year
% - (done) read the PURSS paper
% - (done) retro-fit the RSS paper draft writing
% - (done) Just write up
% - (done) Add figures, references
% - add indices. Use \index

 \section{What is RSS?}
\label{subsec:whatIsRSS}
\emph{Responsibility-sensitive safety} (RSS) is an approach to the safety of automated driving systems (ADS) that has been attracting growing attention. Safety is a central theme in every aspect of  \ac{ads} research. RSS is unique among these approaches in that it provides \emph{safety rules} that are rigorously formulated in mathematical terms. Unlike most algorithms and techniques studied for \ac{ads}, RSS is not so much about \emph{how to drive safely} as about \emph{breaking down the ultimate goal} (namely safety) \emph{into concrete and checkable conditions}. The goal of RSS is that those safety rules  guarantee  \ac{ads} safety in the rigorous form of \emph{mathematical proofs}. How far RSS has gone towards this ultimate goal of ADS safety proofs, and what is still needed for this ultimate goal, are questions that we will discuss later in Section~\ref{subsec:RSSFuture}. 
\index{responsibility-sensitive safety}\index{RSS|see{responsibility-sensitive safety}}

\setcounter{footnote}{2}
The methodology and basic techniques of RSS were first introduced by researchers at Mobileye/Intel~\cite{ShalevShwartzSS17RSS}. The safety rules derived in RSS have a variety of possible usages (as we will discuss in Section~\ref{subsec:RSSUsages}), giving RSS a multifaceted character. Moreover, its presentation to non-expert audience such as Mobileye/Intel's webpage\footnote{\url{https://www.mobileye.com/responsibility-sensitive-safety}} tends to emphasize RSS's ideological aspects. All these have somewhat  blurred the theoretical substance of RSS. 

In this introductory paper, in contrast, we aim at a theoretical account of RSS mainly from the viewpoints of  logic and software science. We expect that such an introductory account will bridge between different research communities
and 
 pave the way to a logical theory, formalization, and software tools---aspects of the RSS study that are largely unexplored at this moment. Specifically, in the current paper, we 1) formulate two requirements of safety rules (\cref{req:RSSSafetyAssurance,req:RSSResponsibility}), and 2) formulate their assume-guarantee relationship (see \cref{fig:RSSFrmwk}).

\begin{remark}[RSS terminologies]\label{rem:RSSTerminologies}
Although there are already a number of research works on RSS as well as practical works (including the efforts to use it in international standards),  comprehensive technical accounts on RSS are scarce. In particular, terminologies for the core concepts in RSS do not seem to be fixed in the literature.

In this paper, we will use the terminologies listed in Table~\ref{table:RSSTerminologies}. Note that they may differ from the terminologies used in the literature (which vary from a paper to another). The notions in the table will be introduced in due course. Their correspondence to those used in the original paper~\cite{ShalevShwartzSS17RSS} is also shown in the table. 
\end{remark}

\begin{table}[tp]
 \caption{RSS terminologies}
 \label{table:RSSTerminologies}
 \centering
\newcommand{\hdrule}{\midrule[\heavyrulewidth]}
 \begin{tabular}{p{3.5cm}p{3cm}p{5.5cm}}
  in this paper  & in~\cite{ShalevShwartzSS17RSS} & \\\hdrule
  RSS safety rule  & (no explicit name) & A pair of an RSS safety condition and a proper response \\\midrule
  RSS safety\newline condition & safe distance & An ``instantly-checkable'' condition that guarantees safety henceforth. Often stated in terms of a safety distance 
  \\\midrule
  proper response & proper response & A control strategy that realizes safety
  \\\midrule
  RSS responsibility principle & safety rule, \newline common sense  rule & A high-level and informal principle used as assumptions on the behaviors of vehicles
 \end{tabular}
\end{table}

\section{Elements of RSS}
\label{subsec:RSSElem}
\subsection{RSS Safety Rules}
\label{subsubsec:RSSSafetyRules}

The central constructs in RSS are \emph{RSS safety rules}. Here is their semi-formal definition---a concrete example will appear  in Section~\ref{subsec:RSSEx}. 

\begin{definition}[RSS safety rule, RSS safety condition, proper response]\label{def:RSSSafetyRule}
\index{RSS safety rule}
\index{RSS safety condition}
\index{proper response}
An \emph{RSS safety rule} is a pair $R=(C,\PR)$ of an RSS safety condition $C$ and a proper response $\PR$, designed  specifically for each driving scenario (straight road or crossing, other cars driving in the same direction or another, etc.). Here,
\begin{itemize}
 \item an \emph{RSS safety condition} $C$ is a rigorous condition formulated in mathematical terms,
 % (e.g.\ a logical formula that combines inequality requirements over velocities, positions, maximum acceleration and braking rates, etc.), 
which must be \emph{instantly checkable} (in the sense that it mentions the current values of physical quantities, not the future ones); and
 \item 
 a \emph{proper response} $\PR$ is a control strategy that achieves safe driving, typically until the vehicle comes to a halt.
\end{itemize}
\end{definition}

Here is the first fundamental requirement of an RSS safety rule. 
\begin{requirement}[safety assurance in RSS] 
\label{req:RSSSafetyAssurance}
\index{safety assurance (in RSS)}
 An RSS safety rule $R=(C,\PR)$ is required to satisfy the following:
\begin{quote}
\emph{  Whenever the RSS condition $C$ is satisfied, executing the proper response $\PR$ from that moment leads to safe driving (i.e.\ driving with no collision).}
\end{quote}
\end{requirement}
By considering the beginning  (at time $0$) of the execution of $\PR$, the above requirement implies, in particular, that
\begin{quote}
 \emph{Whenever the RSS condition $C$ is satisfied, there is no collision at that moment.}
\end{quote}

Ideally, this safety assurance should be mathematically proved, building on the rigorous formulation of the RSS safety condition $C$ as well as that of the proper response $\PR$. Such a mathematical proof makes the requirement a \emph{theorem}.

A crucial feature of an RSS safety rule $R=(C,\PR)$ is that it reduces the problem of safety \emph{in the future} to a condition that can be verified \emph{at present},  namely the RSS safety condition $C$. The RSS safety condition $C$ ensures the safety of executions of the proper response $\PR$, much like a \emph{precondition} in a program logic ensures the safety of executions of a program  (see e.g.~\cite{Winskel93}). 

Such reduction of the future to the present is not easy. In fact, ensuring future safety is impossible without posing suitable assumptions about the behaviors of other vehicles. Figure~\ref{fig:rssNoResponsibility} is a situation that is often used as an example in the literature (including~\cite{ShalevShwartzSS17RSS}): a collision happens if the other vehicles come close to the \ac{sv}; and the \ac{sv} can do nothing to avoid it.

\begin{figure}[tbp]
 \centering
 \includegraphics[width=4cm]{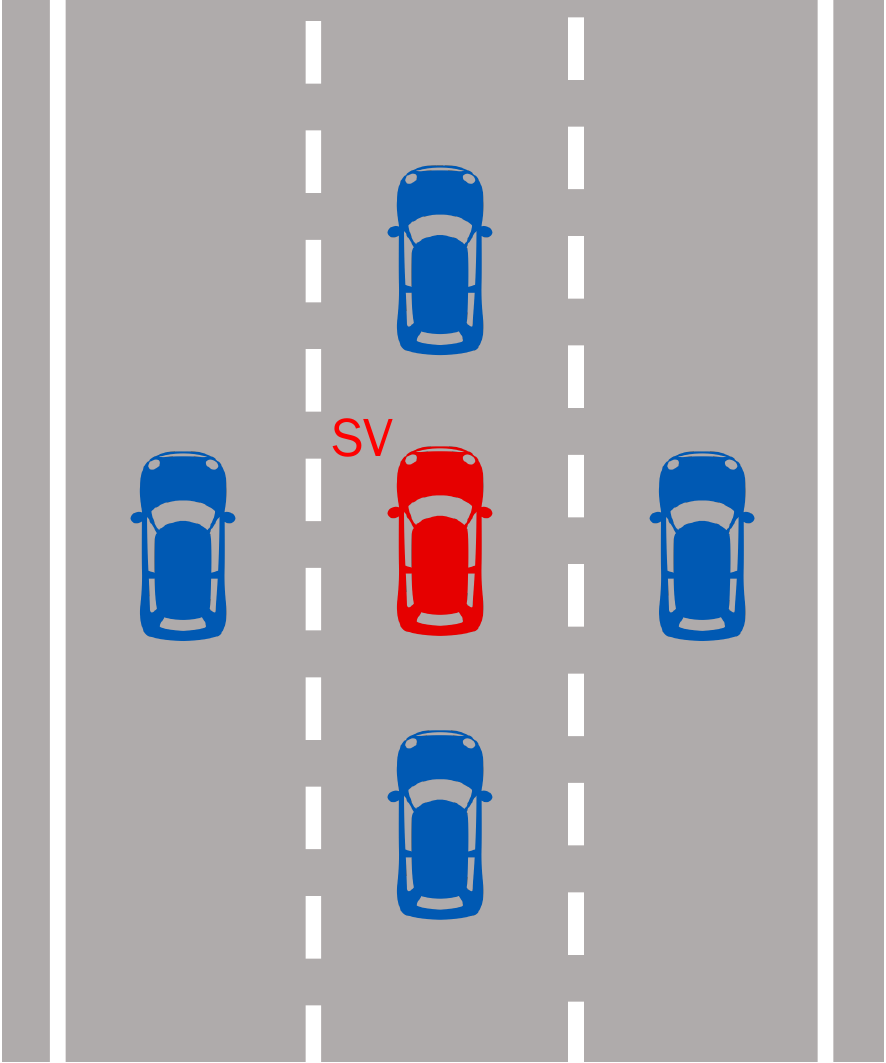}
 \caption{the \acf{sv} is surrounded by other vehicles. When they come close, there is nothing that the \ac{sv} can do to avoid a collision.}
\label{fig:rssNoResponsibility}
\end{figure}

\subsection{RSS Responsibility Principles}
\label{subsubsec:RSSPrinciples}
It is therefore needed to formulate certain behavioral constraints that all traffic participants are expected to respect. Such constraints can be thought of as \emph{contracts} in driving situations, when the latter are identified with multi-agent systems. Each traffic participant must respect these constraints; moreover, each agent can act assuming that all the other agents obey these constraints. In RSS, these behavioral constraints are derived from a high-level idea of \emph{responsibility}. 

Specifically, RSS expresses its idea of responsibility in the following informal form of \emph{RSS responsibility principles}. (Note that the name ``RSS responsibility principle'' differs from those which appear in the literature---see Remark~\ref{rem:RSSTerminologies}.)

\begin{definition}[RSS responsibility principles~\cite{ShalevShwartzSS17RSS}]
\label{def:RSSPrinciples}
\index{RSS responsibility principle}
\hfill
\begin{enumerate}
 	 \item Don't hit the car in front of you
	 \item 
Don't cut in recklessly
	 \item 
Right of way is given, not taken
	 \item 
Be cautious in areas with limited visibility
	 \item 
If you can avoid a crash without causing another one, you must

\end{enumerate}
\end{definition}

Note that these principles indeed embody a natural idea of responsibility, or ``duties of care,'' in driving. They are close to those rules commonly expected of human drivers.

The uses of the five RSS responsibility principles are twofold. Firstly, they can be used to establish Requirement~\ref{req:RSSSafetyAssurance}---more specifically, in a mathematical proof of the satisfaction of Requirement~\ref{req:RSSSafetyAssurance}, one can 
derive some  assumptions from the informal principles in Definition~\ref{def:RSSPrinciples}, and require them of the other vehicles' behaviors. 

The second use of the RSS responsibility principles is that, as a participant in a traffic scenario (that is thought of as a multi-agent system), the \acf{sv} must itself respect those principles:

\begin{requirement}[responsibility in RSS] 
\label{req:RSSResponsibility}
\index{responsibility (in RSS)}
 An RSS safety rule $R=(C,\PR)$ is required to satisfy the following:
\begin{quote}
 Let $E$ be an arbitrary execution of the proper response $\PR$; assume that the execution $E$ starts at a state in which the RSS safety condition $C$ is true. Then the execution $E$ respects the RSS responsibility principles (Definition~\ref{def:RSSPrinciples}).
\end{quote}
\end{requirement}

 Requirement~\ref{req:RSSResponsibility} is expected to be mathematically proven, too. In doing so, one has to turn the RSS responsibility principles (that are only informally stated, Definition~\ref{def:RSSPrinciples}) into certain rigorous mathematical conditions, in a way specific to the driving scenario in question. See Section~\ref{subsec:RSSEx} for an example.

\section{The RSS Framework}\label{subsec:RSSFrmwk}
The elements of RSS that we have described constitute what we call the \emph{RSS framework}, depicted in Figure~\ref{fig:RSSFrmwk}. Each RSS safety rule $R=(C,\PR)$ is formulated for an individual driving scenario (``single-lane same-direction,'' ``cut in,'' ``crossing,'' etc.)---the design of the rule, as well as the proofs that it satisfies Requirements~\ref{req:RSSSafetyAssurance}--\ref{req:RSSResponsibility}, is heavily dependent on the choice of a driving scenario. An example will be presented in Section~\ref{subsec:RSSEx}.

\begin{figure}[tbp]
\includegraphics[width=\textwidth]{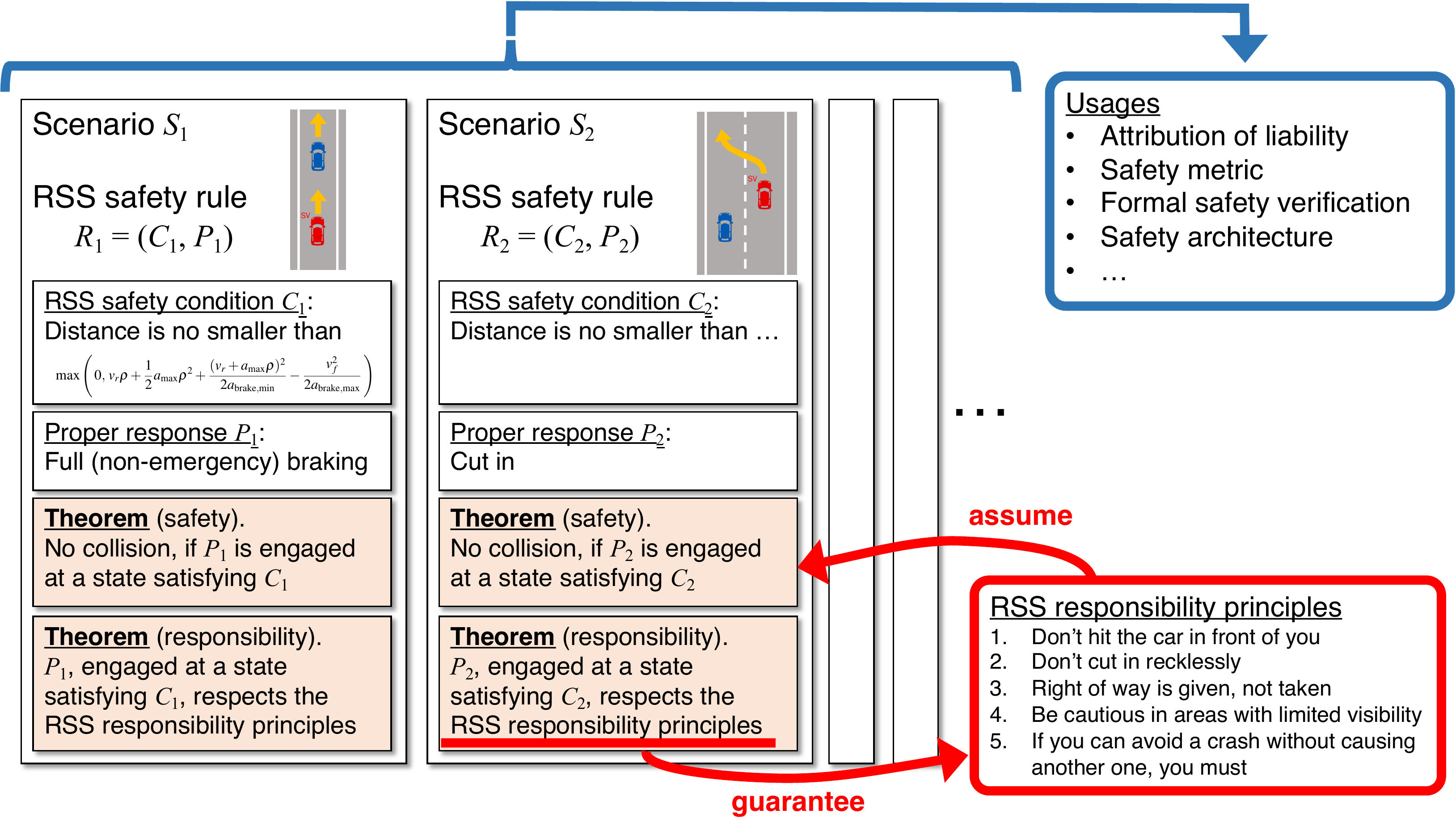}
 \caption{the RSS framework. An RSS safety rule is formulated for an individual driving scenario. Each rule is expected to satisfy safety (Requirement~\ref{req:RSSSafetyAssurance}) and responsibility (Requirement~\ref{req:RSSResponsibility}). The latter \emph{guarantees} that the subject vehicle acts responsibly; conversely, the safety proof assumes that the other vehicles act responsibly. }
 \label{fig:RSSFrmwk}
\end{figure}

In the proof of safety, it is assumed that other vehicles respect the RSS responsibility principles (Definition~\ref{def:RSSPrinciples})---otherwise safety is often unachievable, see Figure~\ref{fig:rssNoResponsibility}. Conversely, it is needed to show that the \ac{sv} respects the RSS responsibility principles, so that other vehicles can rely on it when they plan their behaviors. This is Requirement~\ref{req:RSSResponsibility} (responsibility).

While concrete usages of the RSS framework are discussed later in Section~\ref{subsec:RSSUsages}, we can already see that conceptual benefits of RSS are significant. 

For one, the RSS safety rules give a precise recipe for safe driving, in which 1) triggering conditions and control strategies are specified in mathematical terms, and 2) safety is established through a mathematical proof. RSS can therefore be seen as a promising approach to the goal of \emph{formal verification of ADS safety}. 

It is, however, unrealistic to expect that RSS realizes a world with zero traffic accidents. Even if the \ac{sv} acts according to the RSS safety rules, other vehicles may not, especially those driven by human drivers. RSS is still useful in such situations with traffic accidents. The RSS responsibility principles explicate natural ``contracts'' that each traffic participant is expected to follow. These principles can therefore be used to identify who is liable for an accident---namely the one who did not respect them. 

The last argument can be turned upside down and yield the following: 
\begin{quote}
 \emph{a vehicle is not responsible for an accident as long as it respects the RSS responsibility principles}. 
\end{quote}
This is an answer to a major challenge that is currently hindering  large-scale deployment of automated driving, namely
\begin{quote}
\emph{the difficulty of determining the boundary of the responsibilities of automated driving systems and their manufacturers.
}\end{quote}

The above boundary of responsibilities is currently vague for \ac{ads}, which exposes the \ac{ads} manufacturers to the risk of unexpected and exceeding liabilities. This risk is a big burden when a company wants to run an \ac{ads} business, potentially blocking the development and deployment of the   \ac{ads} technology. Clarification of the boundary of responsibilities is therefore pursued by many parties, including standardization bodies such as ISO, UL, and SAE. Indeed, the use of RSS is often advocated in these standardization efforts,   among which the efforts towards the IEEE 2846 standard are particularly notable.\footnote{\url{https://sagroups.ieee.org/2846/}}

\section{Example: an RSS Safety Rule for the Single-Lane Same-Direction Scenario}
\label{subsec:RSSEx}
We exhibit an example of an RSS safety rule and  its safety and responsibility proofs. The example is one of the first examples in the literature and is taken from~\cite{ShalevShwartzSS17RSS}.

\subsection{The Scenario $S_{\slsd}$}
The driving scenario in question, denoted by $S_{\slsd}$, has a single lane and two vehicles driving in the same direction. See Figure~\ref{fig:singleLaneSameDir}: the vehicle behind is the one under our control (the \acf{sv}); the \ac{sv} drives behind another vehicle that is called the \emph{\acf{pov}}.

\begin{figure}[tbp]
\centering
\includegraphics[width=2cm]{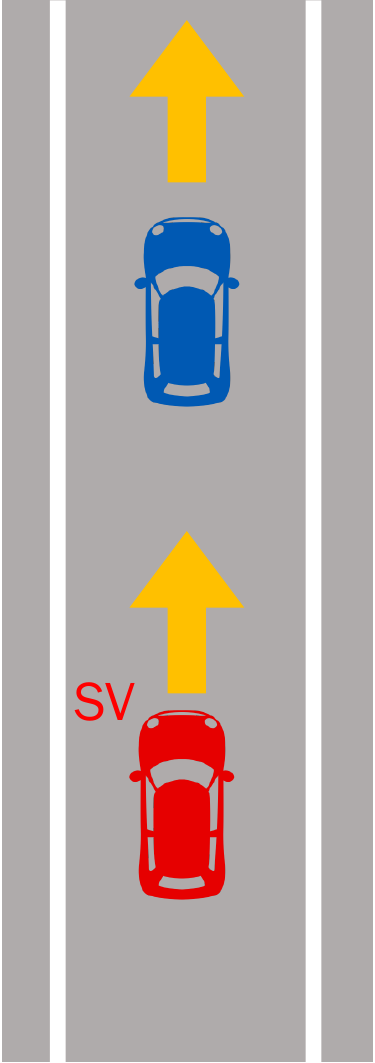}
 \caption{the single-lane same-direction scenario. The \acf{sv} is following the other vehicle. The latter is called the \emph{\acf{pov}}. }
 \label{fig:singleLaneSameDir}
\end{figure}

The goal is to avoid collision regardless of the behavior of the \ac{pov}. Here, however, we rule out some unrealistic but physically possible behaviors of the \ac{pov} from our consideration---such as the \ac{pov} being hit by a comet and suddenly coming to a halt.\footnote{The comet here is an example of the legal notion of \emph{act of God}. \url{https://en.wikipedia.org/wiki/Act_of_God}} The worst case within our consideration is the \ac{pov} engaging  emergency braking and coming to a halt in a short moment. We want the \ac{sv} to stop without colliding with the \ac{pov}. We want the \ac{sv} to do so by comfortable braking and not by emergency braking. Moreover, we have to take into account the \emph{response time}, the time between the moment the \ac{pov} starts braking and the moment the \ac{sv} starts its response. All these require a suitable distance between the \ac{pov} and the \ac{sv}; the question is how large exactly the distance should be.

Note that the above restriction of the scenario's scope (``no comet'') is similar to the use of the RSS responsibility principles, although there is no explicit RSS principle that corresponds to the ``no comet'' assumption. The argument here is ``if a collision happens because of a comet, the \ac{sv} is not held responsible,'' which is logically parallel to one that uses RSS responsibility principles such as ``if a collision happens because of another vehicle's reckless cut in, the \ac{sv} is not held responsible.''

\subsection{The RSS Safety Condition and the Proper Response}\label{subsubsec:RSSExCondPR}
In~\cite{ShalevShwartzSS17RSS}, an RSS safety condition $C_{\slsd}$ and a proper response $\PR_{\slsd}$ are given as follows. They together constitute an RSS safety rule $R_{\slsd}=(C_{\slsd},\PR_{\slsd})$.

 \subsubsection{The RSS Safety Condition  $C_{\slsd}$}
The condition  $C_{\slsd}$ is given by
        \begin{equation}\label{eq:RSSOnewayTraffic}
          \begin{aligned}
          & 	x_{f} - x_{r} 
          % >
	   \;>\;
          % \\&
          \max\left(\,0,\,
              v_{r}\rho + \frac{1}{2}a_{\mathrm{max}}  \rho^2 + \frac{(v_{r} + a_{\mathrm{max}}  \rho)^2}{2a_{\mathrm{brake,min}}} -\frac{v_{f}^2}{2a_{\mathrm{brake,max}}}\,\right).
          \end{aligned}
        \end{equation}
        Here, the following are dynamic parameters that describe the current state of the driving situation:
	    \begin{itemize}
	     \item  $x_{f}, x_{r}$ are the positions of the front vehicle (\ac{pov}) and the rear vehicle (\ac{sv}), respectively\footnote{We ignore the lengths of the cars for simplicity. }; and
	     \item $v_{f}, v_{r}$ are their velocities, respectively, modeled
        in the 1-dimensional lane coordinate.
		   
	    \end{itemize}
	    Besides, the following are static parameters for the driving scenario---they do not change from one state to another. Their values are decided  according to  traffic laws, regional customs,  vehicle specs, etc.:
	    \begin{itemize}
	     \item $\rho$ is the maximum \emph{response time} that
        the rear vehicle might take to initiate the required
        braking;
	     \item $a_{\max}$ is the maximum (forward) acceleration rate of the rear vehicle;

	     \item $a_\mathrm{brake,min}$ is the maximum
        comfortable braking rate for the rear vehicle; and
	     \item         $a_\mathrm{brake,max}$ is the maximum emergency braking rate for the front vehicle. We assume $0<a_\mathrm{brake,min}<a_\mathrm{brake,max}$.
	    \end{itemize}
 An example of these parameter values is found in~\cite{XuWW20TRB}: $\rho = 0.3$ s, $a_{\max} = 2$ \si{m\per\second^{2}}, $a_\mathrm{brake,min} = 4$ \si{m\per\second^{2}}, and $a_\mathrm{brake,max} =8$ \si{m\per\second^{2}}.

\subsubsection{The Proper Response  $\PR_{\slsd}$}
A proper response is a control strategy that is expected to \emph{avoid any collision in the future}, \emph{no matter how other vehicles would behave} (within a prescribed range of possible behaviors---recall the ``no comet'' argument in the above). The proper response  $\PR_{\slsd}$ in~\cite{ShalevShwartzSS17RSS} is 
\begin{quote}
 to engage the maximum comfortable braking, after a response time that is no bigger than $\rho$. The behavior during the response time is arbitrary.
\end{quote}
The response time is included since otherwise the control strategy would be unrealizable. Note that there are in fact many constraints on  the behavior during the response time: some are physical (an instant acceleration to the speed of light is ruled out, for example); others come from traffic laws, the design of a car, etc. Among all the possible behaviors during the response time, the worst case behavior is  accelerating at the maximum rate $a_{\max}$.

We note that proper responses in RSS---such as $\PR_{\slsd}$ in the above---may be quite harsh, in the sense that they are undesirable 
in view of other quality metrics than safety (such as comfort and fuel efficiency). We will discuss, later in  Section~\ref{subsubsec:RSSSafetyArch}, how frequent deployment of proper responses can be  avoided.

\subsection{The Safety Proof}
\begin{theorem}[safety of $R_{\slsd}$]\label{thm:RSSExSafetyThm}
 Let $s$ be a state of the driving scenario $S_{\slsd}$, and let the positions and the velocities of the vehicles in $s$ be denoted by $x_{f}, x_{r}, v_{f}, v_{r}$ as described in Section~\ref{subsubsec:RSSExCondPR}. 

Assume that the state $s$ satisfies the RSS safety condition $C_{\slsd}$. Consider an execution $e$ of the proper response $\PR_{\slsd}$ that starts at $s$ and ends when the \ac{sv} comes to a halt or collides into the \ac{pov}. Then, no collision occurs in this execution $e$ of $\PR_{\slsd}$.
\end{theorem}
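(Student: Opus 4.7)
The plan is to reduce the theorem to a worst-case comparison between the two vehicles' trajectories, under the ``no comet'' scope restriction that bounds the \ac{pov}'s deceleration by $a_{\mathrm{brake,max}}$. First I would replace the \ac{sv}'s arbitrary response-time behaviour in $e$ by its extremal profile: constant forward acceleration $a_{\max}$ for the full response time $\rho$, followed by constant deceleration at $a_{\mathrm{brake,min}}$ until the \ac{sv} halts at some time $T_r$. Symmetrically, I would bound the \ac{pov}'s trajectory from below by emergency braking at $a_{\mathrm{brake,max}}$, yielding a stop at $T_f := v_f/a_{\mathrm{brake,max}}$. A short monotonicity observation confirms that any other admissible motion of the \ac{sv} yields a smaller $x_r(t)$, and any other admissible motion of the \ac{pov} yields a larger $x_f(t)$, so it suffices to establish $x_f(t) > x_r(t)$ for all $t \in [0, T_r]$ under these extremal trajectories.

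Under these worst cases, the \ac{sv} travels a total distance of $v_r\rho + \tfrac{1}{2}a_{\max}\rho^2 + (v_r + a_{\max}\rho)^2/(2a_{\mathrm{brake,min}})$ before halting, whereas the \ac{pov} covers only $v_f^2/(2a_{\mathrm{brake,max}})$. Plugging these into the nontrivial branch of~\eqref{eq:RSSOnewayTraffic} and rearranging gives $x_f + v_f^2/(2a_{\mathrm{brake,max}}) > x_r + v_r\rho + \tfrac{1}{2}a_{\max}\rho^2 + (v_r + a_{\max}\rho)^2/(2a_{\mathrm{brake,min}})$, i.e.\ the \ac{pov}'s final stopped position is strictly ahead of the \ac{sv}'s final stopped position. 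The degenerate branch, where $C_{\slsd}$ reduces to $x_f > x_r$, covers the regime in which the \ac{pov} is sufficiently faster than the \ac{sv} for the initial gap alone to suffice.

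The remaining, and most delicate, step is to show that $d(t) := x_f(t) - x_r(t)$ stays strictly positive throughout $[0, T_r]$, not merely at the terminal instant. Splitting the interval at $\rho$ and at $T_f$, the relative velocity $\dot d(t) = v_f(t) - v_r(t)$ is piecewise affine, and direct second-derivative computations yield $\ddot d < 0$ on every subinterval (with values $-(a_{\max}+a_{\mathrm{brake,max}})$, $a_{\mathrm{brake,min}}-a_{\mathrm{brake,max}}$, and $-a_{\mathrm{brake,min}}$ respectively). Hence $d$ is piecewise concave, and its minimum over $[0,T_r]$ is attained at one of the four phase boundaries $\{0,\rho,T_f,T_r\}$. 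It then remains to check each of these four values is positive: $d(0) > 0$ follows from $C_{\slsd}$, $d(T_r) > 0$ from the previous paragraph, $d(T_f) \geq d(T_r) > 0$ by monotonicity on $[T_f, T_r]$ (the \ac{pov} is stopped and the \ac{sv} still moves forward), and $d(\rho) > 0$ by a case split on whether the nontrivial bound in~\eqref{eq:RSSOnewayTraffic} is positive.

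The hard part is precisely verifying $d(\rho) > 0$: the other three endpoint values follow routinely, but bounding $d(\rho)$ requires comparing the \ac{sv}'s post-response stopping distance $(v_r+a_{\max}\rho)^2/(2a_{\mathrm{brake,min}})$ against the \ac{pov}'s remaining stopping distance $(v_f-a_{\mathrm{brake,max}}\rho)^2/(2a_{\mathrm{brake,max}})$ at the handover instant $\rho$. The original RSS reference sketches this informally; for a fully rigorous proof one should either package the comparison as a separate lemma, or introduce a Lyapunov-like quantity of the form $d(t) - v_f(t)^2/(2a_{\mathrm{brake,max}}) + v_r(t)^2/(2a_{\mathrm{brake,min}})$ that is constant across the braking phases of the extremal trajectory.
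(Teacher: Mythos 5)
Your overall strategy---worst-case reduction (SV accelerates at $a_{\max}$ for the full time $\rho$ then brakes at $a_{\mathrm{brake,min}}$; POV brakes at $a_{\mathrm{brake,max}}$), computation of the two stopping positions, and a reduction of ``no collision throughout $[0,T_r]$'' to checking the gap $d(t)=x_f(t)-x_r(t)$ at finitely many instants---is the same as the paper's. But your execution has one sign error and one genuine gap. The sign error: on $[T_f,T_r]$ the POV has zero acceleration while the SV decelerates at $a_{\mathrm{brake,min}}$, so $\ddot d = 0-(-a_{\mathrm{brake,min}}) = +a_{\mathrm{brake,min}}>0$; $d$ is \emph{convex} there, not concave, and your claimed value $-a_{\mathrm{brake,min}}$ is wrong. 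Your parenthetical observation that $\dot d=-v_r\le 0$ on that piece still forces the minimum of that piece to the right endpoint, so the conclusion survives, but ``$\ddot d<0$ on every subinterval'' is false as stated.

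The genuine gap is the one you flag yourself: your piecewise-concavity route obliges you to verify $d(\rho)>0$ as a separate endpoint check, and you leave it unproven, offering only suggestions for how one ``should'' establish it. As written the proof is therefore incomplete at its self-declared crux. The cure is to drop the piecewise decomposition in favour of a single global observation about the relative velocity, which is exactly what the paper does (its Cases 1--4): $\dot d = v_f - v_r$ is nonincreasing on $[0,\min(T_f,T_r)]$ (it decreases at rate $a_{\max}+a_{\mathrm{brake,max}}$ during the response phase and at rate $a_{\mathrm{brake,max}}-a_{\mathrm{brake,min}}>0$ afterwards, using $a_{\mathrm{brake,min}}<a_{\mathrm{brake,max}}$), and equals $-v_r\le 0$ once the POV has stopped. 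Hence $\dot d$ changes sign at most once, from positive to negative, so $d$ is unimodal on $[0,T_r]$ and attains its minimum at $t=0$ or $t=T_r$---precisely the two instants you already handle via \eqref{eq:RSSOnewayTraffic}. With this in place, $d(\rho)\ge\min\bigl(d(0),d(T_r)\bigr)>0$ and $d(T_f)\ge\min\bigl(d(0),d(T_r)\bigr)>0$ come for free; your ``hard part'' is not a separate obligation at all.
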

The proof here follows the outline of the one presented in~\cite{ShalevShwartzSS17RSS}. 
\begin{proof}
In the execution $e$ of $\PR_{\slsd}$ in question, there are two arbitrary components: the behavior of the \ac{sv} during the response time, and the behavior of the \ac{pov}.  It is however obvious that the worst choices in terms of safety are 1) the \ac{sv} accelerates at the maximum rate $a_{\max}$, at the beginning of $e$, for the longest possible response time $\rho$, and 2) the \ac{pov}  engages the maximum braking, namely at the rate $a_\mathrm{brake,max}$, for the whole period of $e$. In the rest of the proof, we assume these behaviors of the \ac{sv} and the \ac{pov} 
 without  loss of generality.

We study the sign of the relative velocity of the \ac{sv} with respect to the \ac{pov}. To do so, we list up the possible patterns of the time-varying relationship between the velocities of the \ac{sv} and the \ac{pov}. See Figure~\ref{fig:RSSExVelocityPatterns}.
\begin{itemize}
 \item In Case~1, the relative velocity of \ac{sv} is always positive, therefore the \emph{inter-vehicle distance} between the \ac{sv} and the \ac{pov} becomes minimum when the \ac{sv} comes to a halt.
 \item In Cases~2--3, the relative velocity of \ac{sv} turns from negative to positive. Therefore the inter-vehicle distance becomes minimum either at the start (at time $0$) or when the \ac{sv} comes to a halt.
 \item In Case~4, the relative velocity of \ac{sv} is always negative, therefore the  inter-vehicle distance becomes minimum at the start (at time $0$). 
\end{itemize}
It is easy to see that the four cases are exhaustive (the assumption  $0<a_\mathrm{brake,min}<a_\mathrm{brake,max}$ is crucial here). It therefore suffices to ensure 
$x_{f} > x_{r}$
both at time $0$  and at the time when the \ac{sv} comes to a halt.

A necessary and sufficient condition for avoiding collision at time $0$ is, obviously,
\begin{equation}\label{eq:RSSExBeginning}
  x_{f}-x_{r}>0. 
\end{equation}

A condition for avoiding collision at the time when the \ac{sv} comes to a halt is derived as follows.
\begin{itemize}
 \item The \ac{sv} travels the distance $v_{r}\rho+\frac{1}{2}a_{\mathrm{max}}  \rho^2$ during the response time, at the end of which its velocity reaches $v_{r}+ a_{\mathrm{max}}\rho$.  The subsequent braking phase takes the time
 $\frac{v_{r} + a_{\mathrm{max}}  \rho}{a_\mathrm{brake,min}}$, during which the \ac{sv} travels the distance
 \begin{math}
% \frac{(v_{r} + a_{\mathrm{max}}  \rho)^2}{2a_{\mathrm{brake,min}}}
  \frac{1}{2}(v_{r} + a_{\mathrm{max}}  \rho)\frac{v_{r} + a_{\mathrm{max}}  \rho}{a_\mathrm{brake,min}}
 \end{math}.
 \item To compute the distance that the \ac{pov} travels until the \ac{sv} comes to a halt, note first that we can restrict to Cases~1--3 in Figure~\ref{fig:RSSExVelocityPatterns}---this is because the minimum inter-vehicle distance is at time $0$ (not when the SV comes to a halt) in Case~4. We can easily see that, in Cases~1--3, the \ac{pov} comes to a halt earlier than the \ac{sv} does. Therefore the traveled distance for \ac{pov} (until the \ac{sv} comes to a halt) is $\frac{1}{2}v_{f}\frac{v_{f}}{a_\mathrm{brake,max}}$.
 \item We conclude that, in Cases~1--3 (Case~4 can be ignored as argued in the above), the positions of the SV and the POV when the SV comes to a halt are 
\begin{displaymath}
 x_{r}+ v_{r}\rho+\frac{1}{2}a_{\mathrm{max}}  \rho^2 +\frac{(v_{r} + a_{\mathrm{max}}  \rho)^2}{2a_{\mathrm{brake,min}}}
 \quad\text{and}\quad
 x_{f} + \frac{v_{f}^2}{2a_{\mathrm{brake,max}}},
\end{displaymath}
respectively. There is no collision at that moment if and only if 
\begin{equation}\label{eq:RSSExEnd}
  x_{f} - x_{r} > v_{r}\rho+\frac{1}{2}a_{\mathrm{max}}  \rho^2+\frac{(v_{r} + a_{\mathrm{max}}  \rho)^2}{2a_{\mathrm{brake,min}}} -  \frac{v_{f}^2}{2a_{\mathrm{brake,max}}}.
\end{equation}
\end{itemize}

The RSS safety condition (\ref{eq:RSSOnewayTraffic}) implies both (\ref{eq:RSSExBeginning}--\ref{eq:RSSExEnd}), and thus ensures that there is no collision at the beginning or at the end of an execution of $\PR_{\slsd}$, which in turn ensures that there is no collision at \emph{any} moment during the execution. This concludes the proof.
\end{proof}

\begin{figure}[tbp]
\centering

  \begin{subfigure}[b]{0.21\textwidth}
  \centering
 \scalebox{.7}{\begin{tikzpicture}[gnuplot]
%% generated with GNUPLOT 5.4p2 (Lua 5.4; terminal rev. Jun 2020, script rev. 114)
%% 木  9/23 14:11:34 2021
\tikzset{every node/.append style={font={\normalsize}}}
\path (0.000,0.000) rectangle (6.604,4.572);
\gpcolor{color=gp lt color border}
\gpsetlinetype{gp lt border}
\gpsetdashtype{gp dt solid}
\gpsetlinewidth{1.00}
\draw[gp path] (0.952,0.985)--(1.132,0.985);
\draw[gp path] (6.051,0.985)--(5.871,0.985);
\node[gp node right] at (0.768,0.985) {0};
\draw[gp path] (1.501,0.985)--(1.501,1.165);
\draw[gp path] (1.501,4.263)--(1.501,4.083);
\node[gp node center] at (1.501,0.677) {$\rho$};
\draw[gp path] (0.952,4.263)--(0.952,0.985)--(6.051,0.985)--(6.051,4.263)--cycle;
\gpsetdashtype{gp dt 2}
\draw[gp path](1.502,3.324)--(1.502,0.986);
\node[gp node center,rotate=-270] at (0.292,2.624) {velocity};
\node[gp node center] at (3.501,0.215) {time};
\node[gp node right] at (4.583,3.929) {SV};
\gpcolor{rgb color={0.580,0.000,0.827}}
\gpsetdashtype{gp dt 1}
\gpsetlinewidth{3.00}
\draw[gp path] (4.767,3.929)--(5.683,3.929);
\draw[gp path] (0.952,3.170)--(1.004,3.185)--(1.055,3.199)--(1.107,3.213)--(1.158,3.228)%
  --(1.210,3.242)--(1.261,3.256)--(1.313,3.271)--(1.364,3.285)--(1.416,3.299)--(1.467,3.314)%
  --(1.519,3.314)--(1.570,3.285)--(1.622,3.256)--(1.673,3.228)--(1.725,3.199)--(1.776,3.170)%
  --(1.828,3.141)--(1.879,3.113)--(1.931,3.084)--(1.982,3.055)--(2.034,3.027)--(2.085,2.998)%
  --(2.137,2.969)--(2.188,2.941)--(2.240,2.912)--(2.291,2.883)--(2.343,2.854)--(2.394,2.826)%
  --(2.446,2.797)--(2.497,2.768)--(2.549,2.740)--(2.600,2.711)--(2.652,2.682)--(2.703,2.654)%
  --(2.755,2.625)--(2.806,2.596)--(2.858,2.567)--(2.909,2.539)--(2.961,2.510)--(3.012,2.481)%
  --(3.064,2.453)--(3.115,2.424)--(3.167,2.395)--(3.218,2.367)--(3.270,2.338)--(3.321,2.309)%
  --(3.373,2.281)--(3.424,2.252)--(3.476,2.223)--(3.527,2.194)--(3.579,2.166)--(3.630,2.137)%
  --(3.682,2.108)--(3.733,2.080)--(3.785,2.051)--(3.836,2.022)--(3.888,1.994)--(3.939,1.965)%
  --(3.991,1.936)--(4.042,1.907)--(4.094,1.879)--(4.145,1.850)--(4.197,1.821)--(4.248,1.793)%
  --(4.300,1.764)--(4.351,1.735)--(4.403,1.707)--(4.454,1.678)--(4.506,1.649)--(4.557,1.621)%
  --(4.609,1.592)--(4.660,1.563)--(4.712,1.534)--(4.763,1.506)--(4.815,1.477)--(4.866,1.448)%
  --(4.918,1.420)--(4.969,1.391)--(5.021,1.362)--(5.072,1.334)--(5.124,1.305)--(5.175,1.276)%
  --(5.227,1.247)--(5.278,1.219)--(5.330,1.190)--(5.381,1.161)--(5.433,1.133)--(5.484,1.104)%
  --(5.536,1.075)--(5.587,1.047)--(5.639,1.018)--(5.690,0.989)--(5.697,0.985);
\gpcolor{color=gp lt color border}
\node[gp node right] at (4.583,3.621) {POV};
\gpcolor{rgb color={0.000,0.620,0.451}}
\draw[gp path] (4.767,3.621)--(5.683,3.621);
\draw[gp path] (0.952,2.952)--(1.004,2.894)--(1.055,2.837)--(1.107,2.780)--(1.158,2.722)%
  --(1.210,2.665)--(1.261,2.607)--(1.313,2.550)--(1.364,2.493)--(1.416,2.435)--(1.467,2.378)%
  --(1.519,2.320)--(1.570,2.263)--(1.622,2.206)--(1.673,2.148)--(1.725,2.091)--(1.776,2.034)%
  --(1.828,1.976)--(1.879,1.919)--(1.931,1.861)--(1.982,1.804)--(2.034,1.747)--(2.085,1.689)%
  --(2.137,1.632)--(2.188,1.574)--(2.240,1.517)--(2.291,1.460)--(2.343,1.402)--(2.394,1.345)%
  --(2.446,1.287)--(2.497,1.230)--(2.549,1.173)--(2.600,1.115)--(2.652,1.058)--(2.703,1.000)%
  --(2.716,0.985);
\gpcolor{color=gp lt color border}
\gpsetdashtype{gp dt solid}
\gpsetlinewidth{1.00}
\draw[gp path] (0.952,4.263)--(0.952,0.985)--(6.051,0.985)--(6.051,4.263)--cycle;
%% coordinates of the plot area
\gpdefrectangularnode{gp plot 1}{\pgfpoint{0.952cm}{0.985cm}}{\pgfpoint{6.051cm}{4.263cm}}
\end{tikzpicture}
%% gnuplot variables}
 \caption{Case 1}
   \end{subfigure}
  \qquad
  \begin{subfigure}[b]{0.21\textwidth}
  \centering
  \scalebox{.7}{\begin{tikzpicture}[gnuplot]
%% generated with GNUPLOT 5.4p2 (Lua 5.4; terminal rev. Jun 2020, script rev. 114)
%% 木  9/23 14:11:34 2021
\tikzset{every node/.append style={font={\normalsize}}}
\path (0.000,0.000) rectangle (6.604,4.572);
\gpcolor{color=gp lt color border}
\gpsetlinetype{gp lt border}
\gpsetdashtype{gp dt solid}
\gpsetlinewidth{1.00}
\draw[gp path] (0.952,0.985)--(1.132,0.985);
\draw[gp path] (6.051,0.985)--(5.871,0.985);
\node[gp node right] at (0.768,0.985) {0};
\draw[gp path] (1.501,0.985)--(1.501,1.165);
\draw[gp path] (1.501,4.263)--(1.501,4.083);
\node[gp node center] at (1.501,0.677) {$\rho$};
\draw[gp path] (0.952,4.263)--(0.952,0.985)--(6.051,0.985)--(6.051,4.263)--cycle;
\gpsetdashtype{gp dt 2}
\draw[gp path](1.502,3.324)--(1.502,0.986);
\node[gp node center,rotate=-270] at (0.292,2.624) {velocity};
\node[gp node center] at (3.501,0.215) {time};
\node[gp node right] at (4.583,3.929) {SV};
\gpcolor{rgb color={0.580,0.000,0.827}}
\gpsetdashtype{gp dt 1}
\gpsetlinewidth{3.00}
\draw[gp path] (4.767,3.929)--(5.683,3.929);
\draw[gp path] (0.952,3.170)--(1.004,3.185)--(1.055,3.199)--(1.107,3.213)--(1.158,3.228)%
  --(1.210,3.242)--(1.261,3.256)--(1.313,3.271)--(1.364,3.285)--(1.416,3.299)--(1.467,3.314)%
  --(1.519,3.314)--(1.570,3.285)--(1.622,3.256)--(1.673,3.228)--(1.725,3.199)--(1.776,3.170)%
  --(1.828,3.141)--(1.879,3.113)--(1.931,3.084)--(1.982,3.055)--(2.034,3.027)--(2.085,2.998)%
  --(2.137,2.969)--(2.188,2.941)--(2.240,2.912)--(2.291,2.883)--(2.343,2.854)--(2.394,2.826)%
  --(2.446,2.797)--(2.497,2.768)--(2.549,2.740)--(2.600,2.711)--(2.652,2.682)--(2.703,2.654)%
  --(2.755,2.625)--(2.806,2.596)--(2.858,2.567)--(2.909,2.539)--(2.961,2.510)--(3.012,2.481)%
  --(3.064,2.453)--(3.115,2.424)--(3.167,2.395)--(3.218,2.367)--(3.270,2.338)--(3.321,2.309)%
  --(3.373,2.281)--(3.424,2.252)--(3.476,2.223)--(3.527,2.194)--(3.579,2.166)--(3.630,2.137)%
  --(3.682,2.108)--(3.733,2.080)--(3.785,2.051)--(3.836,2.022)--(3.888,1.994)--(3.939,1.965)%
  --(3.991,1.936)--(4.042,1.907)--(4.094,1.879)--(4.145,1.850)--(4.197,1.821)--(4.248,1.793)%
  --(4.300,1.764)--(4.351,1.735)--(4.403,1.707)--(4.454,1.678)--(4.506,1.649)--(4.557,1.621)%
  --(4.609,1.592)--(4.660,1.563)--(4.712,1.534)--(4.763,1.506)--(4.815,1.477)--(4.866,1.448)%
  --(4.918,1.420)--(4.969,1.391)--(5.021,1.362)--(5.072,1.334)--(5.124,1.305)--(5.175,1.276)%
  --(5.227,1.247)--(5.278,1.219)--(5.330,1.190)--(5.381,1.161)--(5.433,1.133)--(5.484,1.104)%
  --(5.536,1.075)--(5.587,1.047)--(5.639,1.018)--(5.690,0.989)--(5.697,0.985);
\gpcolor{color=gp lt color border}
\node[gp node right] at (4.583,3.621) {POV};
\gpcolor{rgb color={0.000,0.620,0.451}}
\draw[gp path] (4.767,3.621)--(5.683,3.621);
\draw[gp path] (0.952,3.498)--(1.004,3.441)--(1.055,3.383)--(1.107,3.326)--(1.158,3.269)%
  --(1.210,3.211)--(1.261,3.154)--(1.313,3.096)--(1.364,3.039)--(1.416,2.982)--(1.467,2.924)%
  --(1.519,2.867)--(1.570,2.809)--(1.622,2.752)--(1.673,2.695)--(1.725,2.637)--(1.776,2.580)%
  --(1.828,2.522)--(1.879,2.465)--(1.931,2.408)--(1.982,2.350)--(2.034,2.293)--(2.085,2.235)%
  --(2.137,2.178)--(2.188,2.121)--(2.240,2.063)--(2.291,2.006)--(2.343,1.949)--(2.394,1.891)%
  --(2.446,1.834)--(2.497,1.776)--(2.549,1.719)--(2.600,1.662)--(2.652,1.604)--(2.703,1.547)%
  --(2.755,1.489)--(2.806,1.432)--(2.858,1.375)--(2.909,1.317)--(2.961,1.260)--(3.012,1.202)%
  --(3.064,1.145)--(3.115,1.088)--(3.167,1.030)--(3.207,0.985);
\gpcolor{color=gp lt color border}
\gpsetdashtype{gp dt solid}
\gpsetlinewidth{1.00}
\draw[gp path] (0.952,4.263)--(0.952,0.985)--(6.051,0.985)--(6.051,4.263)--cycle;
%% coordinates of the plot area
\gpdefrectangularnode{gp plot 1}{\pgfpoint{0.952cm}{0.985cm}}{\pgfpoint{6.051cm}{4.263cm}}
\end{tikzpicture}
%% gnuplot variables}
 \caption{Case 2}
   \end{subfigure}
  \qquad
  \begin{subfigure}[b]{0.21\textwidth}
  \centering
  \scalebox{.7}{\begin{tikzpicture}[gnuplot]
%% generated with GNUPLOT 5.4p2 (Lua 5.4; terminal rev. Jun 2020, script rev. 114)
%% 木  9/23 14:11:34 2021
\tikzset{every node/.append style={font={\normalsize}}}
\path (0.000,0.000) rectangle (6.604,4.572);
\gpcolor{color=gp lt color border}
\gpsetlinetype{gp lt border}
\gpsetdashtype{gp dt solid}
\gpsetlinewidth{1.00}
\draw[gp path] (0.952,0.985)--(1.132,0.985);
\draw[gp path] (6.051,0.985)--(5.871,0.985);
\node[gp node right] at (0.768,0.985) {0};
\draw[gp path] (1.501,0.985)--(1.501,1.165);
\draw[gp path] (1.501,4.263)--(1.501,4.083);
\node[gp node center] at (1.501,0.677) {$\rho$};
\draw[gp path] (0.952,4.263)--(0.952,0.985)--(6.051,0.985)--(6.051,4.263)--cycle;
\gpsetdashtype{gp dt 2}
\draw[gp path](1.502,3.324)--(1.502,0.986);
\node[gp node center,rotate=-270] at (0.292,2.624) {velocity};
\node[gp node center] at (3.501,0.215) {time};
\node[gp node right] at (4.583,3.929) {SV};
\gpcolor{rgb color={0.580,0.000,0.827}}
\gpsetdashtype{gp dt 1}
\gpsetlinewidth{3.00}
\draw[gp path] (4.767,3.929)--(5.683,3.929);
\draw[gp path] (0.952,3.170)--(1.004,3.185)--(1.055,3.199)--(1.107,3.213)--(1.158,3.228)%
  --(1.210,3.242)--(1.261,3.256)--(1.313,3.271)--(1.364,3.285)--(1.416,3.299)--(1.467,3.314)%
  --(1.519,3.314)--(1.570,3.285)--(1.622,3.256)--(1.673,3.228)--(1.725,3.199)--(1.776,3.170)%
  --(1.828,3.141)--(1.879,3.113)--(1.931,3.084)--(1.982,3.055)--(2.034,3.027)--(2.085,2.998)%
  --(2.137,2.969)--(2.188,2.941)--(2.240,2.912)--(2.291,2.883)--(2.343,2.854)--(2.394,2.826)%
  --(2.446,2.797)--(2.497,2.768)--(2.549,2.740)--(2.600,2.711)--(2.652,2.682)--(2.703,2.654)%
  --(2.755,2.625)--(2.806,2.596)--(2.858,2.567)--(2.909,2.539)--(2.961,2.510)--(3.012,2.481)%
  --(3.064,2.453)--(3.115,2.424)--(3.167,2.395)--(3.218,2.367)--(3.270,2.338)--(3.321,2.309)%
  --(3.373,2.281)--(3.424,2.252)--(3.476,2.223)--(3.527,2.194)--(3.579,2.166)--(3.630,2.137)%
  --(3.682,2.108)--(3.733,2.080)--(3.785,2.051)--(3.836,2.022)--(3.888,1.994)--(3.939,1.965)%
  --(3.991,1.936)--(4.042,1.907)--(4.094,1.879)--(4.145,1.850)--(4.197,1.821)--(4.248,1.793)%
  --(4.300,1.764)--(4.351,1.735)--(4.403,1.707)--(4.454,1.678)--(4.506,1.649)--(4.557,1.621)%
  --(4.609,1.592)--(4.660,1.563)--(4.712,1.534)--(4.763,1.506)--(4.815,1.477)--(4.866,1.448)%
  --(4.918,1.420)--(4.969,1.391)--(5.021,1.362)--(5.072,1.334)--(5.124,1.305)--(5.175,1.276)%
  --(5.227,1.247)--(5.278,1.219)--(5.330,1.190)--(5.381,1.161)--(5.433,1.133)--(5.484,1.104)%
  --(5.536,1.075)--(5.587,1.047)--(5.639,1.018)--(5.690,0.989)--(5.697,0.985);
\gpcolor{color=gp lt color border}
\node[gp node right] at (4.583,3.621) {POV};
\gpcolor{rgb color={0.000,0.620,0.451}}
\draw[gp path] (4.767,3.621)--(5.683,3.621);
\draw[gp path] (0.952,4.208)--(1.004,4.151)--(1.055,4.094)--(1.107,4.036)--(1.158,3.979)%
  --(1.210,3.921)--(1.261,3.864)--(1.313,3.807)--(1.364,3.749)--(1.416,3.692)--(1.467,3.634)%
  --(1.519,3.577)--(1.570,3.520)--(1.622,3.462)--(1.673,3.405)--(1.725,3.347)--(1.776,3.290)%
  --(1.828,3.233)--(1.879,3.175)--(1.931,3.118)--(1.982,3.061)--(2.034,3.003)--(2.085,2.946)%
  --(2.137,2.888)--(2.188,2.831)--(2.240,2.774)--(2.291,2.716)--(2.343,2.659)--(2.394,2.601)%
  --(2.446,2.544)--(2.497,2.487)--(2.549,2.429)--(2.600,2.372)--(2.652,2.314)--(2.703,2.257)%
  --(2.755,2.200)--(2.806,2.142)--(2.858,2.085)--(2.909,2.027)--(2.961,1.970)--(3.012,1.913)%
  --(3.064,1.855)--(3.115,1.798)--(3.167,1.740)--(3.218,1.683)--(3.270,1.626)--(3.321,1.568)%
  --(3.373,1.511)--(3.424,1.454)--(3.476,1.396)--(3.527,1.339)--(3.579,1.281)--(3.630,1.224)%
  --(3.682,1.167)--(3.733,1.109)--(3.785,1.052)--(3.836,0.994)--(3.844,0.985);
\gpcolor{color=gp lt color border}
\gpsetdashtype{gp dt solid}
\gpsetlinewidth{1.00}
\draw[gp path] (0.952,4.263)--(0.952,0.985)--(6.051,0.985)--(6.051,4.263)--cycle;
%% coordinates of the plot area
\gpdefrectangularnode{gp plot 1}{\pgfpoint{0.952cm}{0.985cm}}{\pgfpoint{6.051cm}{4.263cm}}
\end{tikzpicture}
%% gnuplot variables}
 \caption{Case 3}
   \end{subfigure}
  \qquad
  \begin{subfigure}[b]{0.21\textwidth}
  \centering
  \scalebox{.7}{\begin{tikzpicture}[gnuplot]
%% generated with GNUPLOT 5.4p2 (Lua 5.4; terminal rev. Jun 2020, script rev. 114)
%% 木  9/23 14:11:34 2021
\tikzset{every node/.append style={font={\normalsize}}}
\path (0.000,0.000) rectangle (6.604,4.572);
\gpcolor{color=gp lt color border}
\gpsetlinetype{gp lt border}
\gpsetdashtype{gp dt solid}
\gpsetlinewidth{1.00}
\draw[gp path] (0.952,0.985)--(1.132,0.985);
\draw[gp path] (6.051,0.985)--(5.871,0.985);
\node[gp node right] at (0.768,0.985) {0};
\draw[gp path] (1.501,0.985)--(1.501,1.165);
\draw[gp path] (1.501,4.263)--(1.501,4.083);
\node[gp node center] at (1.501,0.677) {$\rho$};
\draw[gp path] (0.952,4.263)--(0.952,0.985)--(6.051,0.985)--(6.051,4.263)--cycle;
\gpsetdashtype{gp dt 2}
\draw[gp path](1.502,3.324)--(1.502,0.986);
\node[gp node center,rotate=-270] at (0.292,2.624) {velocity};
\node[gp node center] at (3.501,0.215) {time};
\node[gp node right] at (4.583,3.929) {SV};
\gpcolor{rgb color={0.580,0.000,0.827}}
\gpsetdashtype{gp dt 1}
\gpsetlinewidth{3.00}
\draw[gp path] (4.767,3.929)--(5.683,3.929);
\draw[gp path] (0.952,3.170)--(1.004,3.185)--(1.055,3.199)--(1.107,3.213)--(1.158,3.228)%
  --(1.210,3.242)--(1.261,3.256)--(1.313,3.271)--(1.364,3.285)--(1.416,3.299)--(1.467,3.314)%
  --(1.519,3.314)--(1.570,3.285)--(1.622,3.256)--(1.673,3.228)--(1.725,3.199)--(1.776,3.170)%
  --(1.828,3.141)--(1.879,3.113)--(1.931,3.084)--(1.982,3.055)--(2.034,3.027)--(2.085,2.998)%
  --(2.137,2.969)--(2.188,2.941)--(2.240,2.912)--(2.291,2.883)--(2.343,2.854)--(2.394,2.826)%
  --(2.446,2.797)--(2.497,2.768)--(2.549,2.740)--(2.600,2.711)--(2.652,2.682)--(2.703,2.654)%
  --(2.755,2.625)--(2.806,2.596)--(2.858,2.567)--(2.909,2.539)--(2.961,2.510)--(3.012,2.481)%
  --(3.064,2.453)--(3.115,2.424)--(3.167,2.395)--(3.218,2.367)--(3.270,2.338)--(3.321,2.309)%
  --(3.373,2.281)--(3.424,2.252)--(3.476,2.223)--(3.527,2.194)--(3.579,2.166)--(3.630,2.137)%
  --(3.682,2.108)--(3.733,2.080)--(3.785,2.051)--(3.836,2.022)--(3.888,1.994)--(3.939,1.965)%
  --(3.991,1.936)--(4.042,1.907)--(4.094,1.879)--(4.145,1.850)--(4.197,1.821)--(4.248,1.793)%
  --(4.300,1.764)--(4.351,1.735)--(4.403,1.707)--(4.454,1.678)--(4.506,1.649)--(4.557,1.621)%
  --(4.609,1.592)--(4.660,1.563)--(4.712,1.534)--(4.763,1.506)--(4.815,1.477)--(4.866,1.448)%
  --(4.918,1.420)--(4.969,1.391)--(5.021,1.362)--(5.072,1.334)--(5.124,1.305)--(5.175,1.276)%
  --(5.227,1.247)--(5.278,1.219)--(5.330,1.190)--(5.381,1.161)--(5.433,1.133)--(5.484,1.104)%
  --(5.536,1.075)--(5.587,1.047)--(5.639,1.018)--(5.690,0.989)--(5.697,0.985);
\gpcolor{color=gp lt color border}
\node[gp node right] at (4.583,3.621) {POV};
\gpcolor{rgb color={0.000,0.620,0.451}}
\draw[gp path] (4.767,3.621)--(5.683,3.621);
\draw[gp path] (3.403,4.263)--(3.424,4.240)--(3.476,4.182)--(3.527,4.125)--(3.579,4.068)%
  --(3.630,4.010)--(3.682,3.953)--(3.733,3.895)--(3.785,3.838)--(3.836,3.781)--(3.888,3.723)%
  --(3.939,3.666)--(3.991,3.609)--(4.042,3.551)--(4.094,3.494)--(4.145,3.436)--(4.197,3.379)%
  --(4.248,3.322)--(4.300,3.264)--(4.351,3.207)--(4.403,3.149)--(4.454,3.092)--(4.506,3.035)%
  --(4.557,2.977)--(4.609,2.920)--(4.660,2.862)--(4.712,2.805)--(4.763,2.748)--(4.815,2.690)%
  --(4.866,2.633)--(4.918,2.575)--(4.969,2.518)--(5.021,2.461)--(5.072,2.403)--(5.124,2.346)%
  --(5.175,2.288)--(5.227,2.231)--(5.278,2.174)--(5.330,2.116)--(5.381,2.059)--(5.433,2.002)%
  --(5.484,1.944)--(5.536,1.887)--(5.587,1.829)--(5.639,1.772)--(5.690,1.715)--(5.742,1.657)%
  --(5.793,1.600)--(5.845,1.542)--(5.896,1.485)--(5.948,1.428)--(5.999,1.370)--(6.051,1.313);
\gpcolor{color=gp lt color border}
\gpsetdashtype{gp dt solid}
\gpsetlinewidth{1.00}
\draw[gp path] (0.952,4.263)--(0.952,0.985)--(6.051,0.985)--(6.051,4.263)--cycle;
%% coordinates of the plot area
\gpdefrectangularnode{gp plot 1}{\pgfpoint{0.952cm}{0.985cm}}{\pgfpoint{6.051cm}{4.263cm}}
\end{tikzpicture}
%% gnuplot variables}
 \caption{Case 4}
   \end{subfigure}
\caption{the velocities of the \ac{sv} and the \ac{pov} in Theorem~\ref{thm:RSSExSafetyThm}. Note that the assumption $0<a_\mathrm{brake,min}<a_\mathrm{brake,max}$ is crucial here. }
\label{fig:RSSExVelocityPatterns}
\end{figure}
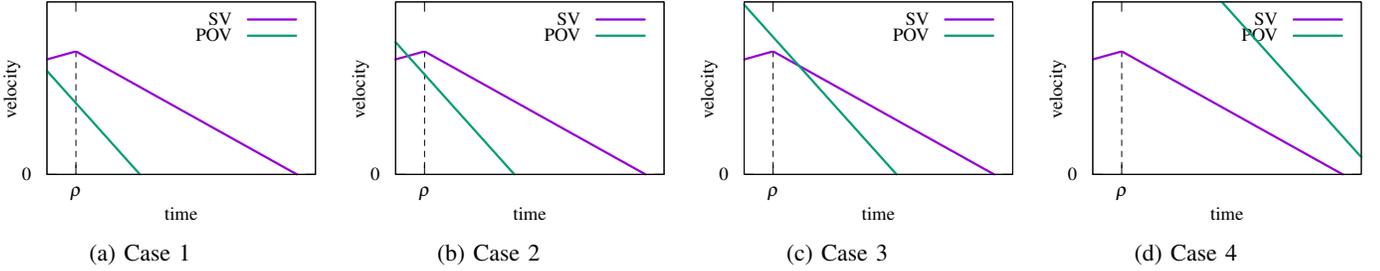

\subsection{The Responsibility Proof}
The following is not a ``theorem'' in a rigorous sense---this is because the RSS responsibility principles (Definition~\ref{def:RSSPrinciples}) are not formally defined conditions. We nevertheless provide an argument for it; such arguments are of great practical values, when it comes to such matters as explainability to the public and attribution of liability. 

\begin{theorem}[responsibility of $R_{\slsd}$]\label{thm:RSSExResponsibilityThm}
 Assume the same setting as in Theorem~\ref{thm:RSSExSafetyThm}. Then the execution of the proper response $\PR_{\slsd}$ satisfies the RSS responsibility principles. 
\end{theorem}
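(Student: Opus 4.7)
The plan is to walk through the five RSS responsibility principles of \cref{def:RSSPrinciples} one by one and argue informally, given the simplicity of the single-lane same-direction scenario $S_{\slsd}$, that each principle is respected by any execution of $\PR_{\slsd}$ that begins from a state satisfying $C_{\slsd}$. Since (as acknowledged in the statement) the principles are not formal predicates, the burden is to give an argument that is convincing at the level of ordinary driving intuition, while tying each claim back to either the mathematical content of \cref{thm:RSSExSafetyThm} or to the geometry of the scenario $S_{\slsd}$.

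First, I would dispatch Principle~1 (``Don't hit the car in front of you'') as a direct corollary of \cref{thm:RSSExSafetyThm}: the theorem already establishes that no collision between the \ac{sv} and the \ac{pov} occurs during the execution of $\PR_{\slsd}$, and the \ac{pov} is the only ``car in front'' in the scenario $S_{\slsd}$. Next, I would treat Principles~2--4 as vacuously satisfied because of the narrow scope of $S_{\slsd}$: there is a single lane with only the \ac{sv} and the \ac{pov} driving in the same direction, so there is no lateral maneuver (no cut-in, hence Principle~2 is inapplicable), no intersection or merge conflict (no right-of-way question, hence Principle~3 is inapplicable), and no occluded region distinguished in the scenario (Principle~4 is inapplicable). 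I would make explicit that the behavior of $\PR_{\slsd}$ during the response time, while unconstrained in the specification, takes place within a single lane and thus cannot violate these principles either.

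Principle~5 (``If you can avoid a crash without causing another one, you must'') is the one that requires slightly more care. The first clause of $\PR_{\slsd}$, namely that the \ac{sv} engages maximum comfortable braking after at most $\rho$ seconds, is precisely a positive action taken to avoid a rear-end collision; \cref{thm:RSSExSafetyThm} confirms that this action indeed prevents collision with the \ac{pov}. Because the scenario contains no other traffic participant with whom the braking could induce a secondary collision, the ``without causing another one'' clause is trivially discharged. I would therefore conclude that $\PR_{\slsd}$ not only does not violate Principle~5, but is in fact a canonical instance of honoring it.

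The main obstacle, as \cref{thm:RSSExResponsibilityThm} itself flags, is that this is not a rigorous mathematical statement: the RSS responsibility principles are stated in natural language and have no formal semantics. The honest work in the argument is therefore not calculation but scoping — one has to be explicit that a principle is vacuous precisely because the scenario rules out the situation it addresses, and one has to be careful not to overclaim; any stronger, multi-lane or multi-vehicle generalization would require an actual formal rendering of Principles~2--5, which the present excerpt defers to later sections. I would close the argument by noting this caveat, mirroring the disclaimer in the statement, and by pointing forward to the need for a logical formalization of the principles in order to upgrade the argument into a genuine theorem.
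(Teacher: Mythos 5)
Your argument is correct and follows essentially the same route as the paper's: Principle~1 (and~5) are discharged via the safety theorem (\cref{thm:RSSExSafetyThm}), and Principles~2--4 are dismissed as inapplicable to the scenario $S_{\slsd}$. Your version is simply more explicit about why each principle is vacuous or satisfied, which is a reasonable elaboration of the paper's one-line informal argument.
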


\begin{proof}(an informal argument)
 The principles~1 and~5 are ensured by safety (Theorem~\ref{thm:RSSExSafetyThm}). The other principles (2--4) do not apply to the current driving scenario $S_{\slsd}$. 
\end{proof}

\section{Usages of RSS}
\label{subsec:RSSUsages}
Some usages of the RSS framework (Figure~\ref{fig:RSSFrmwk}) have been already hinted in the above. Here we go into their details,  providing some pointers to related scientific studies and ongoing practical efforts at the same time.

\subsection{Attribution of Liability}
\label{subsubsec:RSSAttribLiab}
 Attribution of liability is one of the first applications that were pursued using RSS. We have already discussed it briefly in Section~\ref{subsec:RSSFrmwk}: when a collision happened, the traffic participant that did not comply with the RSS responsibility principles is held liable. The validity of this reasoning is supported by the mathematical elements of RSS: the RSS safety rules, especially their safety and responsibility requirements, ensure that there is no collision as long as all the traffic participants adhere to the RSS responsibility principles. 

Here, the roles of RSS safety rules are twofold. For one, an RSS safety rule can be seen as a mathematical incarnation of the RSS responsibility principles, one that enables a rigorous safety proof (the safety theorems in Figure~\ref{fig:RSSFrmwk}). Another important role is as an \emph{evidence of acting responsibly}. The RSS responsibility principles are informal conditions; in contrast, RSS safety rules are rigorous and mathematically checkable. Demonstration of the compliance with RSS safety rules  is therefore  a strong evidence of acting responsibly, via the responsibility requirements for those rules (the responsibility theorems in Figure~\ref{fig:RSSFrmwk}). 

A comprehensive case study of such use of RSS is found in~\cite{ShashuaSS18NHTSA}, where the RSS framework is applied to NHTSA pre-crash scenarios. 

% No rules are fine, but adherence to the rules is a strong evidence. 

% Attribution of liability (\verb+https://www.mobileye.com/responsibility-sensitive-safety/rss_on_nhtsa.pdf+)
% ``In this collision, Car A violated the RSS rules, so it is liable''

% Different usages of RSS have been pursued in the literature. One is for the
% identification of culpable parties in accidents~\cite{ShashuaSS18NHTSA}: RSS
% rules are designed so that if all traffic participants comply with them then no
% collision will occur; therefore, in an accident, at least one vehicle was
% \emph{not} compliant and is therefore culpable. 

\subsection{As a Safety Metric}
\label{subsubsec:RSSSafetyMetric}
\index{safety metric}
Safety metrics for \ac{ads} are an active topic of research, see e.g.~\cite{SilberlingWAKL20,WangLS20RealisticSingleShotLongTerm,AltekarEWCWCRJ20,ZhaoZMSLLZ_IV20,WengRDSB20IV}. The question here is how to evaluate the safety of a trajectory of a vehicle. A naive metric can be given by the minimum distance---the minimum distance to obstacles and  other traffic participants  exhibited during the trajectory---but its limitations are obvious, too, such as the insensitivity to the vehicle speed. A safety metric that has been commonly used is the \dfn{time to collision (TTC)}. 

It is straightforward to derive another safety metric from the last usage of RSS (Section~\ref{subsubsec:RSSAttribLiab}). This \emph{RSS safety metric} measures the degree with which a trajectory satisfies RSS safety rules: the margin with which the inequalities in the RSS safety conditions are satisfied is its safety score. Violation of the RSS safety conditions results in a negative safety score.

Multiple variations and extensions are possible for the above (informal and rudimentary) notion of RSS safety metric. Several of such are presented in~\cite{AltekarEWCWCRJ20}; among them are a metric that takes into account whether a proper response was engaged when needed.

% The use of RSS rules for deriving a safety metric is then a natural idea

% Safety metric (cf. [Weng+, IV'20])
% ``The ego was almost unsafe since it almost violated the RSS rules''
% Compare with other safety metrics.

% Another is as a \emph{safety
%   metric} (discussed and/or used
% in~\cite{SilberlingWAKL20,WangLS20RealisticSingleShotLongTerm,AltekarEWCWCRJ20,ZhaoZMSLLZ_IV20,WengRDSB20IV}):
% the risk of a given situation is measured with respect to whether---and by how
% much---the safety condition of the applicable RSS rule is satisfied or violated.

\subsection{Formal Verification of ADS Safety}
\label{subsubsec:RSSFormalVefif}

 Another obvious usage of RSS is for \emph{formal verification} of \ac{ads} safety.
By proving that a vehicle's control complies with RSS safety rules, one can conclude the safety of the vehicle via the safety theorems of the RSS safety rules. Here, the ``compliance with RSS safety rules'' means the following.

\begin{definition}[compliance with RSS safety rules]
\label{def:RSSComplianceWithRules}
\index{compliance with RSS safety rules}
 Let $\mathcal{R}=(R_{1}, R_{2},\dotsc, R_{n})$ be a list of RSS safety rules, where $R_{i}=(C_{i}, \PR_{i})$ for each $i\in \{1, 2, \dotsc, n\}$. We say that a vehicle's trajectory 
%$T$ 
\emph{complies with $\mathcal{R}$} if, at each moment
% $t\in [0,\mathrm{length}(T)]$ 
in the trajectory, either
 \begin{itemize}
  \item the RSS safety condition $C_{i}$ is true for some $i\in \{1, 2, \dotsc, n\}$, or
  \item the proper response $\PR_{i}$ is being executed for some $i\in \{1, 2, \dotsc, n\}$. Additionally, we require that this execution of $\PR_{i}$  started in a state in which the corresponding RSS safety condition $C_{i}$ was true.
 \end{itemize}
\end{definition}

\begin{theorem}[safety of an RSS-compliant trajectory]\label{thm:RSSComplianceImpliesSafety}
 Let  $\mathcal{R}=(R_{1}, R_{2},\dotsc, R_{n})$ be a list of RSS safety rules, where $R_{i}=(C_{i}, \PR_{i})$ for each $i\in \{1, 2, \dotsc, n\}$. Assume further that each rule $R_{i}$ satisfies the safety requirement (Requirement~\ref{req:RSSSafetyAssurance}). If a vehicle's trajectory $T$ complies with $\mathcal{R}$ (Definition~\ref{def:RSSComplianceWithRules}), then $T$ exhibits no collision. 
\end{theorem}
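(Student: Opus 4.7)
The plan is to fix an arbitrary time $t$ along the trajectory $T$ and show that no collision occurs at $t$; since $t$ is arbitrary, this gives the conclusion. The hypothesis that $T$ complies with $\mathcal{R}$ (\cref{def:RSSComplianceWithRules}) provides exactly two cases at each such $t$, and each one reduces to an application of \cref{req:RSSSafetyAssurance} for the relevant rule $R_i$.

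First I would handle the case where some RSS safety condition $C_i$ holds at time $t$. Here I invoke the immediate corollary of \cref{req:RSSSafetyAssurance} noted just after \cref{def:RSSSafetyRule}, namely that whenever $C_i$ is true there is no collision at that very instant (obtained by specialising the safety guarantee to time $0$ of the hypothetical execution of $\PR_i$ started at $t$). This gives the desired absence of collision at $t$ directly.

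Next I would handle the case where some proper response $\PR_i$ is being executed at $t$, the execution having started at some earlier time $t_0 \le t$ in a state $s_0$ satisfying $C_i$. Applying \cref{req:RSSSafetyAssurance} to the rule $R_i$ at the state $s_0$, the execution of $\PR_i$ starting at $s_0$ is collision-free throughout its duration. Because, by \cref{def:RSSComplianceWithRules}, the segment of $T$ from $t_0$ to $t$ is itself such an execution of $\PR_i$, the collision-freeness transfers to $T$ at time $t$.

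The main obstacle is conceptual rather than computational: one has to be precise about the fact that the index $i$ selected in the two clauses of compliance may vary from moment to moment, so no global rule governs the whole trajectory; the argument must therefore be carried out pointwise, with each $t$ handled in isolation. One also has to ensure that ``$\PR_i$ is being executed'' in the compliance definition really does identify the relevant segment of $T$ with a legitimate execution of $\PR_i$ to which \cref{req:RSSSafetyAssurance} applies, but this is built into \cref{def:RSSComplianceWithRules}. Once these bookkeeping points are acknowledged, the proof is a direct dispatch on the two clauses of compliance, with \cref{req:RSSSafetyAssurance} doing all the real work.
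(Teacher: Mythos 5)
Your proof is correct and follows essentially the same route as the paper's: fix an arbitrary moment of $T$, split on the two clauses of \cref{def:RSSComplianceWithRules}, and in each case invoke \cref{req:RSSSafetyAssurance} (via the time-$0$ remark stated right after that requirement for the first clause, and via the full safety guarantee of the ongoing execution of $\PR_i$ for the second). Your additional bookkeeping remarks about the index $i$ varying with $t$ are accurate but not needed beyond what the pointwise argument already handles.
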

\begin{proof}
 Consider an arbitrary moment of the trajectory $T$. If some RSS safety condition $C_{i}$ is true at that moment, then there is no collision at that moment---this follows from the remark immediately after Requirement~\ref{req:RSSSafetyAssurance}. If some proper response $\PR_{i}$ is being executed at that moment, the safety guaranteed in Requirement~\ref{req:RSSSafetyAssurance} implies that no collision occurs then. 
\end{proof}

Towards the goal of mathematically proving safety of an \ac{ads}, 
it suffices to ensure that every trajectory of the \ac{ads} complies with some list of RSS safety rules in the sense of Definition~\ref{def:RSSComplianceWithRules}. 
This is the consequence of Theorem~\ref{thm:RSSComplianceImpliesSafety}.
However, there are two major challenges in ensuring the above. 
\begin{itemize}
 \item An \ac{ads} controller is a complex system, involving a number of numeric optimization algorithms and statistical machine learning. Therefore, proving its compliance with RSS safety rules is hard. 
 \item Compliance in the sense of Definition~\ref{def:RSSComplianceWithRules} requires a comprehensive list $\mathcal{R}$ of RSS safety rules---otherwise there will be moments that are not covered by any $C_{i}$ or $\PR_{i}$. Obtaining such a comprehensive list will take an enormous effort: a safety proof for a single safety rule is nontrivial already for a simple scenario (Section~\ref{subsec:RSSEx}); and the number of driving scenarios to be covered is huge. 

% \footnote{It should be noted that,  
% the efforts  for rule derivation are largely non-volatile. Once an RSS safety rule is formulated, it can be used for the coming dozens of years (or even longer). It might take a long time to obtain a comprehensive rule list, but the non-volatility makes the effort worthwhile.}
\end{itemize}

% via the safety theorems for the  that the SV is
% never responsible for accidents (accidents can still occur when other
% participants are not RSS-compliant). Sometimes one does not go so far as
% formally proving compliance with RSS rules. Even in that case, collecting
% empirical evidences for RSS compliance, e.g.\ by testing, allows one to
% establish logical \emph{safety cases}. The importance of the latter are
% emphasized in standards such as UL 4600~\cite{UL4600_202004}; the use of RSS is
% advocated in the current efforts towards the IEEE 2846 standard.

% Verification of an actual ADS is hard due to its complexity. A safety architecture provides a workaround; see below.

\subsection{Safety Architecture}
\label{subsubsec:RSSSafetyArch}
A promising ``workaround'' to the last two challenges in formal verification (Section~\ref{subsubsec:RSSFormalVefif}) is the use of RSS in a \emph{safety architecture}. \index{safety architecture}

\begin{figure}[tbp]\centering
\includegraphics[bb=256 199 717 421,clip,width=25em]{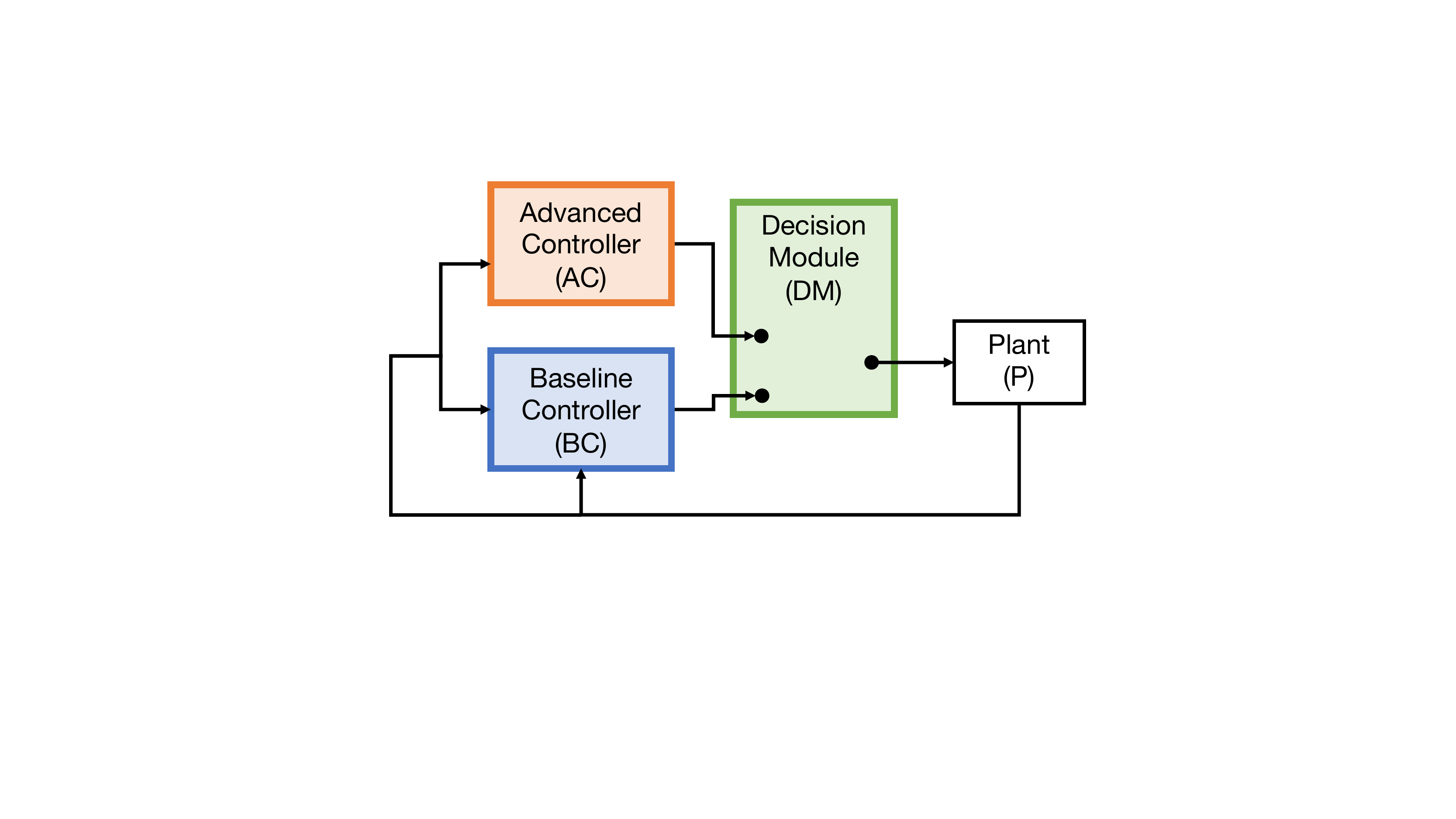}
\caption{the simplex architecture}
\label{fig:simplex}
\end{figure}

A prototypical safety architecture is the \emph{simplex architecture} shown in
\cref{fig:simplex}~\cite{CrenshowGRSK07,SetoKSC98}. Here, the \emph{advanced
  controller} (AC) is a complex controller that pursues not only safety but
also performance (such as comfort, progress, and fuel efficiency); the
\emph{baseline controller} (BC) is a simpler controller with a strong emphasis on
safety; and the \emph{decision module} (DM) switches between the two
controllers. The DM tries to use the AC as often as possible for its superior
performance. However, when the DM finds that the current situation is safety
critical, it switches to the BC whose behaviors are more predictable and easier
to analyze. \index{simplex architecture}

A notable feature of safety architectures such as the simplex architecture is that they enable formal verification of a system that contains a black-box component (such as the AC in the simplex architecture). It does so by ``wrapping'' the black-box component with safety-centric components that closely monitor the black-box component and overtake  control when it is needed. 

The elements of RSS map naturally to the simplex architecture:
\begin{itemize}
  \item The DM makes use of  RSS safety conditions.  When they are about to be violated, the DM switches the control from the AC to the BC.
  \item The BC implements  proper responses. It thus executes a control strategy whose safety is guaranteed.
% . Its control henceforth
%         preserves the RSS condition, whose truth guarantees safety.
  \item If it so happens that the BC restores the RSS safety condition, then the DM switches the control back to  the AC.
\end{itemize}
% In this way, a possibly unsafe AC is made safe, via suitable intervention of the
% RSS-based DM and BC. The whole controller obtained in this manner shall be
% called an \emph{RSS-supervised controller}.
The way the resulting \emph{RSS-supervised controller} operates reflects the argument in Section~\ref{subsubsec:RSSFormalVefif}: it lets the AC control as long as some RSS safety condition $C_{i}$ is true; once it comes close to violating all the RSS safety conditions, it executes a proper response $\PR_{i}$ to maintain safety. The possibility of switching back to the AC is advantageous for performance. 

Such use of RSS in a safety architecture successfully addresses the two challenges in formal verification (Section~\ref{subsubsec:RSSFormalVefif}). 
\begin{itemize}
 \item 
 The complexity of the \ac{ads} controller becomes no problem since it is confined to the AC---the safety of the whole  architecture is ensured exclusively by the properties of the DM and the BC. 
 \item 
On the difficulty of obtaining a comprehensive list of RSS safety rules, a safety architecture offers a \emph{proactive} and \emph{best-effort} alternative to formal verification. Even if a rule list $\mathcal{R}=(R_{1}, R_{2},\dotsc, R_{n})$ does not cover all possible driving scenarios, $\mathcal{R}$ can still be implemented in a safety architecture and ensure the safety of those scenarios which are covered by $\mathcal{R}$. (For those driving scenarios which are not covered, we just let the AC do its best to maintain safety). After all,  formal verification is about analyzing a given controller (Section~\ref{subsubsec:RSSFormalVefif}), while safety architectures are about modifying the controller (identified with the AC) and making it safe. 
\end{itemize}

Such use of RSS in a safety architecture is advocated increasingly often. 
In fact,
some recent papers such as~\cite[Fig.~1]{OborilS20IV} present RSS in the format
of safety architectures, although they may not explicitly refer to the term ``safety
architecture.''

% 	\begin{center}
% 	 \includegraphics[width=.8\textwidth]{chapters/decisionAD/figures/safetyArchitecture.png}
% 	 \includegraphics[width=.8\textwidth]{chapters/decisionAD/figures/sa2.png}
% 	\end{center}
%  (perhaps include a step-by-step figure? ``20210107 RSSBasedBCForADS.pptx'')

% The aforementioned invariant-centric view of RSS naturally leads to another
% notable usage of RSS, namely as part of a \emph{safety architecture} that
% guarantees online safety.

\section{Current State and Future Directions}
\label{subsec:RSSFuture}
\subsection{Making a Rule Set Comprehensive}
\label{subsubsec:RSSFWComprehensive}
Towards the ultimate goal of ADS safety proofs, one important area of RSS that requires further work is the formulation of RSS rules. As we already discussed, RSS rules are formulated and proved correct in a scenario-specific manner, and there are a huge number of driving scenarios. 

While a comprehensive RSS rule set is obviously important and desired, the path to it  might seem endless. 
We nevertheless believe that  efforts in this direction are worthwhile. 
\begin{itemize}
 \item 
 One reason is the use of the rules in a safety architecture (Section~\ref{subsubsec:RSSSafetyArch}). Even if the rule list $\mathcal{R}$ at hand is not comprehensive (and hence not enough to prove the overall safety), the ``best-effort'' usage in a safety architecture ensures safety at least for the scenarios that are already covered by  $\mathcal{R}$. 
 \item 
 Another reason is that  RSS rules are irrevocable, with their correctness being mathematically established. Once derived, they can be used for the coming dozens of years or even longer---they can be seen as common assets of humankind. With more efforts thrown in for the formulation of RSS rules, the rule set grows monotonically. 
\end{itemize}

\subsection{Tool Support for Using, Deriving, and Verifying RSS Rules}
\label{subsubsec:RSSFWTools}
Tool support for RSS has been pursued actively in recent years. One example is an implementation of some RSS rules: it is offered as a library~\cite{GassmannOBLYEAA19IV} that can be used in combination with simulation environments such as Baidu Apollo. This implementation is for \emph{using} RSS rules that have been already derived and verified.

Another area that calls for tool support is the derivation of RSS rules. As we discussed in Section~\ref{subsubsec:RSSSafetyArch}, RSS rule derivation requires systematic and organized efforts, much like formal verification by theorem proving. Therefore the task needs tool support---much like tool support for theorem proving is given by \emph{proof assistants} such as Coq~\cite{Coq} and Isabelle/HOL~\cite{nipkow2002isabelle}---otherwise the efforts will be marred with human errors. \index{proof assistant}

Yet another area that calls for tool support is the (formal) verification of RSS rules. Existing correctness proofs for RSS rules (such as the ones in Section~\ref{subsec:RSSEx}) are mathematical yet not formalized or mechanized; it is not hard to imagine human errors in such proofs, especially when working with complex driving scenarios. The problem of giving formalized and mechanized proofs to RSS rules is investigated in~\cite{RoohiKWSL18arxiv}. Their trial is based on a rigorous notion of signal; they argue that none of the existing \emph{automated} verification tools is suited for the verification problem. We believe that the use of theorem provers that allow human interaction should be pursued. An example of such provers is KeYmaera~X~\cite{FultonMQVP15}.

Note that the last two directions (derivation and verification) are the motivation of the current introduction (namely building a logical theory and formalization). We believe that our identification of
\begin{itemize}
 \item 
  two requirements (\cref{req:RSSSafetyAssurance,req:RSSResponsibility}) and 
 \item 
their assume-guarantee relationship (\cref{fig:RSSFrmwk}) 
\end{itemize}
in this paper
is a necessary first step in the two directions.

\subsection{Permissive, and Thus Practical, RSS Rules}
The scope of RSS rules is principally safety. They can sacrifice other practical performance metrics such as comfort, progress, and fuel efficiency. For greater practical utility, it is desired that RSS rules are less conservative and more permissive. 

One work in this direction is~\cite{RSSplusIV2021accepted}; it proposes an extension of RSS in which the balance between safety and other performance metrics is proactively adjusted. Other promising directions seem to include the following. 

\begin{itemize}
 \item \emph{Fine-grained formalization} of the RSS responsibility principles, so that the formalization respects different  traffic circumstances (urban, rural, or highway),  different regions and countries, and different driving cultures.

 \item \emph{RSS that is aware of intentions and knowledge}. Responsibilities of other vehicles will be totally different depending on whether the \ac{sv}'s turn signal is blinking or not. Moreover, when \ac{sv} makes a maneuver with its turn signal blinking, it is also desired that the \ac{sv} makes sure that other vehicles are aware of the turn signal. 
 \item \emph{Enlarging the variety of proper responses}. For example, \cite{deIacoSC20IV} proposes an extension of the RSS rule in Section~\ref{subsec:RSSEx} in which swerves (in addition to braking) is allowed as evasive maneuvers~\cite{deIacoSC20IV}. This makes the corresponding safety condition much weaker, and the RSS rule more permissive. 
\end{itemize}

\subsection{Perceptual Uncertainties}
\index{perceptual uncertainty}
In the presence of perceptual uncertainties (such as  errors in position measurement and object recognition), the perceived values that RSS safety conditions depend on can be erroneous. 
Making RSS rules tolerant of such perceptual uncertainties is pursued in~\cite{SalayCEASW20PURSS}; the methodology used there is more generally formulated in~\cite{DBLP:conf/nfm/KobayashiSHCIK21} and used in combination with the modeling and verification framework Event-B~\cite{AbrialBHHMV10}. 

A big source of perceptual uncertainties in modern \ac{ads} is its \emph{statistical machine learning} components (neural networks to be specific). 
One way to lessen such ML-related uncertainties is to enrich the  output of an ML component. For example, the use of DNNs' confidence scores is proposed in~\cite{AngusCS19arxiv}. In~\cite{ChowRWGJALKC20}, it is proposed  to look at inconsistencies between perceptual data of different modes.

% LocalWords:  RSS Mobileye webpage RSS's sv SAE pov unrealizable NHTSA
% LocalWords:  explainability TTC DM centric HOL KeYmaera DNNs

\section*{Acknowledgment}
Thanks are due to 
  Clovis Eberhart,
  James Haydon,
  J\'{e}r\'{e}my Dubut,
  Rose Borher,
  Tsutomu Kobayashi,
  Sasinee Pruekprasert,
  Ahmet Cetinkaya,
  Xiaoyi Zhang, and 
  Akihisa Yamada for discussions. 
The author is supported by ERATO HASUO Metamathematics for Systems Design Project (No.\ JPMJER1603), JST.

\bibliographystyle{plain}
\bibliography{decisionAD}

% \printindex
% \cleardoublepage
\end{document}